\newcommand{\note}[1]{}
\newtheorem{lemma}{Lemma}
\newtheorem{theorem}{Theorem}
\newtheorem{corollary}{Corollary}
\newtheorem{definition}{Definition}
\newtheorem{example}{Example}
\def \1{\mathbf 1}
\def \E{\mathbb E}
\def \P{\mathbb P}
\def \R{\mathbb R}
\def \H{{\cal H}}
\def \A{{\cal A}}
\def \X{{\cal X}}
\def \ch{c^*}
\def \mm{C}
\def \N{N}
\begin{document}

\title{The Geometry of Generalized Binary Search}%
\author{Robert~D.~Nowak
}

\maketitle

\begin{abstract}
  \sloppypar This paper investigates the problem of determining a binary-valued function through a sequence of strategically selected queries. The focus is an algorithm called Generalized Binary Search (GBS).  GBS is a well-known greedy algorithm for determining a binary-valued function through a sequence of strategically selected queries.  At each step, a query is selected that most evenly splits the hypotheses under consideration into two disjoint subsets, a natural generalization of the idea underlying classic binary search.  This paper develops novel incoherence and geometric  conditions under which GBS achieves the information-theoretically optimal query complexity; i.e., given a collection of $\N$ hypotheses, GBS terminates with the correct function after no more than a constant times $\log \N$ queries.  Furthermore, a noise-tolerant version of GBS is developed that also achieves the optimal query complexity.  These results are applied to learning halfspaces, a problem arising routinely in image processing and machine learning.  \end{abstract}


\section{Introduction}
\label{sec:intro}

This paper studies learning problems of the following form.  Consider a finite, but potentially very large, collection of binary-valued functions $\H$ defined on a domain $\X$.  In this paper, $\H$ will be called the {\em hypothesis space} and $\X$ will be called the {\em query space}.  Each $h \in \H$ is a mapping from $\X$ to $\{-1,1\}$.  Throughout the paper we will let $\N$ denote the cardinality of $\H$. Assume that the functions in $\H$ are unique and that one function, $h^* \in \H$, produces the correct binary labeling.  It is assumed that $h^*$ is fixed but unknown, and the goal is to determine $h^*$ through as few queries from $\X$ as possible.  For each query $x\in \X$, the value $h^*(x)$, possibly corrupted with independently distributed binary noise, is observed. The goal is to strategically select queries in a sequential fashion in order to identify $h^*$ as quickly as possible.  

\textcolor{black}{If the responses to queries are noiseless, then the problem is related to the construction of a binary decision tree. A sequence of queries defines a path from the root of the tree (corresponding to $\H$) to a leaf (corresponding to a single element of $\H$).  There are several ways in which one might define the notion of an optimal tree; e.g., the tree with the minimum average or worst case depth. 
In general the determination of the optimal tree (in either sense above) is a combinatorial problem and was shown by Hyafil and Rivest to be NP-complete \cite{tree4}.  Therefore, this paper investigates the performance of a greedy procedure called {\em generalized binary search} (GBS), depicted below in Fig.~\ref{fig:gbs}.  At each step GBS selects a query that results in the most even split of the hypotheses under consideration into two subsets responding $+1$ and $-1$, respectively, to the query. The correct response to the query eliminates one of these two subsets from further consideration.  We denote the number of hypotheses remaining at step $n$ by $|\H_n|$. The main results of the paper characterize the {\em worst-case} number of queries required by GBS in order to identify the correct hypothesis $h^*$. More formally, we define the notion of query complexity as follows.}

\textcolor{black}{\begin{definition} The minimum number of queries required by GBS (or another algorithm) to identify any hypothesis in $\H$ is called the {\em query complexity} of the algorithm.  The query complexity is said to be {\em near-optimal} if it is within a constant factor of $\log \N$, since at least $\log \N$ queries are required to specify one of $\N$ hypotheses. \\ 
\end{definition}}

\begin{figure}[t]
\fbox{\parbox[b]{3in}{{\underline{\bf Generalized Binary Search (GBS)}}  \\ \vspace{-.25in} \\
initialize: $n=0$, $\H_0 = \H$.\\
while $|\H_n|>1$  \\ \vspace{-.25in} \\
1) Select $x_n = \arg \min_{x \in \X} |\sum_{h \in \H_n} h(x)|$. \\
2) Query with $x_n$ to obtain response $y_n=h^*(x_n)$. \\
3) Set $\H_{n+1} = \{h\in \H_n: h(x_n)=y_n\}$, $n=n+1$.
}}
\caption{Generalized Binary Search, also known as the Splitting Algorithm.}
\label{fig:gbs}
\end{figure}

Conditions are established under which GBS (and a noise-tolerant variant) have a near-optimal query complexity. The main contributions of this paper are two-fold.  First, incoherence and geometric relations between the pair $(\X,\H)$ are studied to bound the number of queries required by GBS.  This leads to an easily verifiable sufficient condition that guarantees that GBS terminates with the correct hypothesis after no more than a constant times $\log\N$ queries.  Second, noise-tolerant versions of GBS are proposed. The following noise model is considered. The binary response $y\in\{-1,1\}$ to a query $x\in \X$ is an independent realization of the random variable $Y$ satisfying $\P(Y=h^*(x))>\P(Y=-h^*(x))$, where $\P$ denotes the underlying probability measure.  In other words, the response to $x$ is only probably correct.  If a query $x$ is repeated more than once, then each response is an independent realization of $Y$.  A new algorithm based on a weighted (soft-decision) GBS procedure is shown to confidently identify $h^*$ after a constant times $\log \N$ queries even in the presence of noise (under the sufficient condition mentioned above).  An agnostic algorithm that performs well even if $h^*$ is not in the hypothesis space $\H$ is also proposed.

\subsection{Notation}

The following notation will be used throughout the paper. The hypothesis space $\H$ is a finite collection of binary-valued functions defined on a domain $\X$, which is called the query space.  Each $h \in \H$ is a mapping from $\X$ to $\{-1,1\}$. For any subset $\H' \subset H$, $|\H'|$ denotes the number of hypotheses in $\H'$. The number of hypotheses in $\H$ is denoted by $N := |\H|$.  


\section{A Geometrical View of Generalized Binary Search}
\label{sec:gbs}
The efficiency of classic binary search is due to the fact at each step there exists a query that splits the pool of viable hypotheses in half.  
The existence of such queries is a result of the special ordered structure of the problem.  Because of ordering, optimal query locations are easily identified by bisection.  In the general setting in which the query and hypothesis space are arbitrary it is impossible to order the hypotheses in a similar fashion and ``bisecting'' queries may not exist.  For example, consider hypotheses associated with halfspaces of $\X=\R^d$.  Each hypothesis takes the value $+1$ on its halfspace and $-1$ on the complement. A bisecting query may not exist in this case.   To address such situations we next introduce a more general framework that does not require an ordered structure.



\subsection{Partition of $\X$}
While it may not be possible to naturally order the hypotheses within $\X$, there does exist a similar local geometry that can be exploited in the search process. Observe that the query space $\X$ can be partitioned into equivalence subsets such that every $h \in \H$ is constant for all queries in each such subset.  Let $\A(\X,\H)$ denote the smallest such partition\footnote{Each $h$ splits $\X$ into two disjoint sets. Let $C_h := \{x\in \X: h(x)=+1\}$ and let $\bar{C}_n$ denote its complement. $\A$ is the collection of all non-empty intersections of  the form $\bigcap_{h\in H} \widetilde{C}_h$, where $\widetilde{C}_h
  \in \{C_h,\bar C_h\}$, and it is the smallest partition that refines the sets $\{C_h\}_{h\in \H}$. $\A$ is known as the {\em join} of the sets  $\{C_h\}_{h\in \H}$.}. Note that $\X = \bigcup_{A \in \A} A$. For every $A \in \A$ and $h \in \H$, the value of $h(x)$ is constant (either $+1$ or $-1$) for all $x\in A$; denote this value by $h(A)$. Observe that the query selection step in GBS is equivalent to an optimization over the partition cells in $\A$.  That is, it suffices to select a partition cell for the query according to $A_n = \arg \min_{A \in \A} |\sum_{h \in \H_n} h(A)|$. 

The main results of this paper concern the query complexity of GBS, but before moving on let us comment on the computational complexity of the algorithm. The query selection step is the main computational burden in GBS.  \textcolor{black}{Constructing $\A$ may itself be computationally intensive.} However, given $\A$ the computational complexity of GBS is $\N \, |\A|$, up to a constant factor, where $|\A|$ denotes the number of partition cells
in $\A$.  The size and construction of $\A$ is manageable in many practical situations. For example, if $\X$ is finite, then $|\A| \leq |\X|$, where $|\X|$ is the cardinality of $\X$.  Later, in Section~\ref{linear}, we show that if $\H$ is defined by $\N$ halfspaces of $\X:=\R^d$, then $|\A|$ grows like  $\N^d$.  

\subsection{Distance in $\X$}
The partition $\A$ provides a geometrical link between $\X$ and $\H$. The hypotheses induce a distance function on $\A$, and hence $\X$.  For every pair $A,A'\in \A$ the Hamming distance between the response vectors $\{h_1(A),\dots,h_\N(A)\}$ and $\{h_1(A'),\dots,h_\N(A')\}$ provides a natural distance metric in $\X$.
\begin{definition} Two sets $A,A' \in \A$ are said to be {\em $k$-neighbors} if
  $k$ or fewer hypotheses (along with their complements, if they belong to $\H$) output different values on $A$ and $A'$.
\label{neighbor}
\end{definition}
For example, suppose that $\H$ is symmetric, so that $h \in \H$ implies $-h\in \H$.  Then two sets $A$ and $A'$ are $k$-neighbors if the Hamming distance between their respective response vectors is less than or equal to $2k$.  If $\H$ is non-symmetric ($h\in\H$ implies that $-h$ is not in $\H$), then $A$ and $A'$ are $k$-neighbors if the Hamming distance between their respective response vectors is less than or equal to $k$. 
\begin{definition}
  The pair $(\X,\H)$ is said to be {\em $k$-neighborly} if the
  $k$-neighborhood graph of $\A$ is connected (i.e., for every pair of
  sets in $\A$ there exists a sequence of $k$-neighbor sets that begins
  at one of the pair and ends with the other). \label{neighborly}
\end{definition}
\sloppypar If $(\X,\H)$ is $k$-neighborly, then the distance between $A$ and $A'$ is bounded by $k$ times the minimum path length between $A$ and $A'$.  Moreover, the neighborly condition implies that there is an incremental way to move from one query to the another, moving a distance of at most $k$ at each step.  This local geometry guarantees that near-bisecting queries almost always exist, as shown in the following lemma.


\sloppypar
\begin{lemma}{\em
Assume that $(\X,\H)$ is $k$-neighborly and define the coherence parameter
\begin{eqnarray}
\ch(\X,\H) & := & \min_P \max_{h\in \H} \left|\sum_{A\in \A} h(A) \, P(A)\right| \ , 
\label{cstar}
\end{eqnarray}
where the minimization is over all probability mass functions on $\A$.  For every $\H' \subset \H$ and any constant $c$ satisfying $c^* \leq  c < 1$ there exists an $A \in \A$ that approximately bisects $\H'$
$$\left|\sum_{h \in \H'} h(A)\right| \ \leq \ c \, |\H'| \ , $$
or the set $\H'$ is a small
$$|\H'| < \frac{k}{c} \ ,$$
where $|\H'|$ denotes the cardinality of $\H'$.}
\label{lemma1}
\end{lemma}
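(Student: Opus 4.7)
The plan is to introduce the integer-valued function $f(A) := \sum_{h \in \H'} h(A)$ on $\A$ and recast the dichotomy in the lemma as a statement about its range. The two alternatives become: either some $A \in \A$ satisfies $|f(A)| \le c |\H'|$, or $|f(A)| > c|\H'|$ for every $A$. I assume the latter holds throughout and aim to derive $|\H'| < k/c$.

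First, I would extract a ``two-sign'' property of $f$ from the coherence bound. Because $\ch(\X,\H) \le c < 1$, there exists a probability mass function $P^*$ on $\A$ such that $|\sum_{A \in \A} h(A)\, P^*(A)| \le c$ for every $h \in \H$, and in particular for every $h \in \H'$. Exchanging the order of summation gives
$$\left|\sum_{A \in \A} f(A)\, P^*(A)\right| \;=\; \left|\sum_{h \in \H'} \sum_{A \in \A} h(A)\, P^*(A)\right| \;\le\; c\,|\H'|$$
by the triangle inequality. Hence $f$ cannot satisfy $f(A) > c|\H'|$ uniformly on $\A$ (that would force the weighted average to exceed $c|\H'|$), nor $f(A) < -c|\H'|$ uniformly. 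Under the standing assumption, there must therefore exist cells $A^+,A^- \in \A$ with $f(A^+) > c|\H'|$ and $f(A^-) < -c|\H'|$.

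Second, I would use $k$-neighborliness to locate a large single-step jump of $f$. Since $(\X,\H)$ is $k$-neighborly, the $k$-neighborhood graph of $\A$ is connected, so there is a sequence $A^+ = A_0, A_1, \ldots, A_m = A^-$ in which consecutive cells are $k$-neighbors. Along this path the values $f(A_i)$ all lie outside $[-c|\H'|, c|\H'|]$ but change sign, so for some $i$ one has $f(A_i) > c|\H'|$ and $f(A_{i+1}) < -c|\H'|$ (or vice versa), giving $|f(A_i) - f(A_{i+1})| > 2c|\H'|$.

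Third, I would bound the per-step change by $2k$ from the neighborliness definition: at most $k$ hypotheses (with complementary pairs identified in the symmetric case) take different values on any two $k$-neighbors, each contributing at most $\pm 2$ to $f(A_i) - f(A_{i+1})$, so $|f(A_i) - f(A_{i+1})| \le 2k$. Combined with the previous inequality this yields $2c|\H'| < 2k$, i.e., $|\H'| < k/c$, as required. The main obstacle I anticipate is the bookkeeping in this last step in the symmetric setting: one must verify that when both $h$ and $-h$ sit in $\H'$ their sign flips cancel in $f$, so that counting paired flips as a single unit (per the definition of $k$-neighbor) still yields the clean bound $|f(A)-f(A')|\le 2k$. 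Everything else is a routine combination of the triangle inequality, connectivity, and an intermediate-value argument on the integer-valued function $f$.
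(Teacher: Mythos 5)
Your proposal is correct and follows essentially the same route as the paper's own proof: use the coherence bound to produce a distribution under which the averaged sum is at most $c|\H'|$, deduce that either a near-bisecting cell exists or cells of both signs exist, then use $k$-neighborliness and an intermediate-value argument along a connected path to find adjacent cells where the sum jumps by more than $2c|\H'|$ but by at most $2k$ (with complementary pairs cancelling), giving $|\H'| < k/c$. No gaps.
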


{\em Proof:} According to
the definition of $\ch$ it follows that there exists a probability distribution $P$ such that
$$\left|\sum_{h \in \H'} \sum_{A\in \A} h(A)P(A)\right| \ \leq \ c\, |\H'| \ . $$
This implies that there exists an $A \in \A$ such that
$$\left|\sum_{h \in \H'} h(A)\right| \ \leq \ c\, |\H'| \ , $$
or there exists a pair $A$ and $A'$ such that
$$\sum_{h \in \H'}  h(A) \ > \ c\, |\H'|  \mbox{ \ \ and }  \sum_{h \in \H'}  h(A') \ < \ -c\, |\H'| \ . $$
In the former case, it follows that a query from $A$ will reduce the size of $\H'$ by a factor of at least $(1+c)/2$ (i.e., every query $x \in \A$ approximately bisects the subset $\H'$). In latter case, an approximately bisecting query does not exist, but the $k$-neighborly condition implies that $|\H'|$ must be small.  To see this note that the $k$-neighborly condition guarantees that there exists a sequence of $k$-neighbor sets beginning at $A$ and ending at $A'$.  By assumption in this case, $\left| \sum_{h \in
    \H'}h(\cdot)\right| >c|\H'|$ on every set and the sign of $\sum_{h \in \H'}h(\cdot)$ must change at some point in the sequence.  It follows that there exist $k$-neighbor sets $A$ and $A'$ such that $\sum_{h \in \H'} h(A) \ > \ c|\H'|$ and $\sum_{h \in \H'} h(A') \ < \ -c|\H'|$.  Two inequalities follow from this observation.  First, $\sum_{h \in
    \H'}h(A)-\sum_{h\in \H'}h(A')> 2c|\H'|$.  Second, $|\sum_{h \in
    \H'}h(A)-\sum_{h\in \H'}h(A')| \leq 2k$.  Note that if $h$ and its complement $h'$ belong to $\H'$, then their contributions to the quantity $|\sum_{h \in
    \H'}h(A)-\sum_{h\in \H'}h(A')|$ cancel each other. Combining these
  inequalities yields $|\H'|<k/c$. \hfill $\blacksquare$

\textcolor{black}{\begin{example}
{\em To illustrate Lemma~\ref{lemma1}, consider the special situation in which we are given two points $x_1,x_2 \in \X$ known to satisfy $h^*(x_1)=+1$ and $h^*(x_2)=-1$.  This allows us to restrict our attention to only those hypotheses that agree with $h^*$ at these points.  Let $\H$ denote this collection of hypotheses. A depiction of this situation is shown in Fig.~\ref{fig1}, where the solid curves represent the classification boundaries of the hypotheses, and each cell in the partition shown corresponds to a subset of $\X$ (i.e., an element of $\A$).  As long as each subset is non-empty, then the $1$-neighborhood graph is connected in this example. The minimization in (\ref{cstar}) is achieved by the distribution $P = \frac{1}{2} \delta_{x_1} + \frac{1}{2} \delta_{x_2}$ (equal point-masses on $x_1$ and $x_2$) and  $\ch(\X,\H) = 0$.  Lemma~\ref{lemma1} implies that there exists a query (equivalently a partition cell $A$) where half of the hypotheses take the value $+1$ and the other half $-1$.  The shaded cell in Fig.~\ref{fig1} has this {\em bisection} property.  The figure also shows a dashed path between $x_1$ and $x_2$ that passes through the bisecting cell.} \vspace{-.24in}
\end{example}}

\begin{figure}[h]
\centering
 \includegraphics[width=10cm]{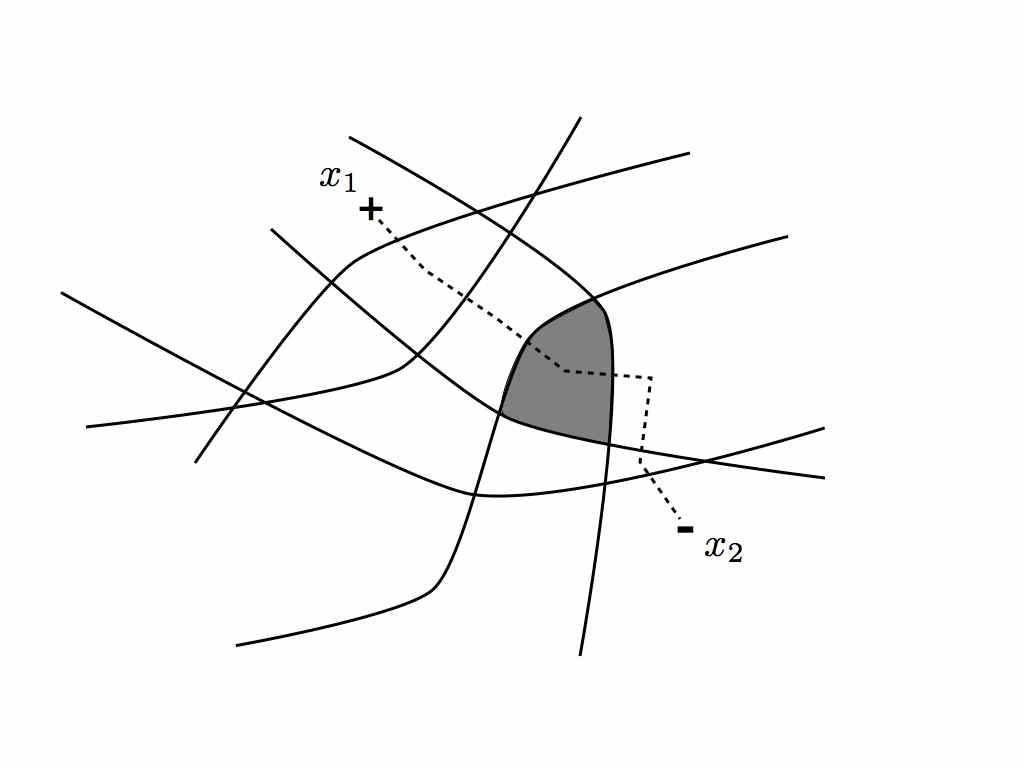}  \ \vspace{-.5in}
    \caption{An illustration of the idea of GBS.  Each solid curve denotes the decision boundary of a hypothesis.  There are six boundaries/hypotheses in this example. The correct hypothesis in this case is known to satisfy $h^*(x_1)=+1$ and $h^*(x_2)=-1$. Without loss of generality we may assume that all hypotheses agree with $h^*$ at these two points. The dashed path between the points $x_1$ and $x_2$ reveals a bisecting query location.  As the path crosses a decision boundary the corresponding hypothesis changes its output from $+1$ to $-1$ (or vice-versa, depending on the direction followed).  At a certain point, indicated by the shaded cell, half of the hypotheses output $+1$ and half output $-1$.  Selecting a query from this cell will {\em bisect} the collection of hypotheses. }
    \label{fig1}
\end{figure}

\subsection{Coherence and Query Complexity}
The coherence parameter $\ch$ quantifies the informativeness of queries.  The coherence parameter is optimized over the choice of $P$, rather than sampled at random according to a specific distribution on $\X$, because the queries may be selected as needed from $\X$.  The minimizer in (\ref{cstar}) exists because the minimization can be computed over the space of finite-dimensional probability mass functions over the elements of $\A$.  For $\ch$ to be close to $0$, there must exist a distribution $P$ on $\A$ so that the moment of every $h\in \H$ is close to zero (i.e., for each $h \in \H$ the probabilities of the responses $+1$ and $-1$ are both close to $1/2$).  This implies that there is a way to randomly sample queries so that the expected response of every hypothesis is close to zero.  In this sense, the queries are incoherent with the hypotheses. In Lemma~\ref{lemma1}, $\ch$ bounds the proportion of the split of {\em any} subset $\H'$ generated by the best query  (i.e., the degree to which the best query bisects any subset $\H'$).  The coherence parameter $c^*$ leads to a bound on the number of queries required by GBS.


\sloppypar
\begin{theorem}{\em
  If $(\X,\H)$ is $k$-neighborly, then GBS terminates
    with the correct hypothesis after at most $\lceil \log
    \N/\log(\lambda^{-1})\rceil$ queries, where $\lambda =
    \max\{\frac{1+\ch}{2},\frac{k+1}{k+2}\}$.}
\label{thm1}
\end{theorem}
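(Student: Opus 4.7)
The plan is to establish, by induction on $n$, the single-step contraction $|\H_{n+1}| \leq \lambda |\H_n|$ whenever $|\H_n| \geq 2$; iterating then gives $|\H_n| \leq \lambda^n \N$, so GBS must terminate (i.e., $|\H_n|=1$) once $\lambda^n \N < 2$, which is satisfied for $n = \lceil \log \N / \log(\lambda^{-1}) \rceil$.

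To prove the single-step contraction I would apply Lemma~\ref{lemma1} to $\H' = \H_n$ with the constant $c := 2\lambda - 1 = \max\{\ch,\, k/(k+2)\}$. By the definition of $\lambda$ we have $\ch \leq c < 1$, so the hypotheses of the lemma are met and two cases arise. In the ``bisecting'' case there is an $A \in \A$ with $\big|\sum_{h \in \H_n} h(A)\big| \leq c\,|\H_n|$; the GBS step, which selects $A_n$ minimizing this quantity, therefore splits $\H_n$ into two sub-collections each of size at most $\tfrac{1+c}{2}|\H_n| = \lambda|\H_n|$, and $\H_{n+1}$ is (at most) one of them.

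In the complementary ``small-set'' case Lemma~\ref{lemma1} gives $|\H_n| < k/c$, and the bound $c \geq k/(k+2)$ forces $|\H_n| \leq k+1$. Here the contraction comes not from a near-bisection but from distinctness of hypotheses: any two distinct $h, h' \in \H_n$ must disagree at some $x \in \X$, so the GBS objective $\big|\sum_{h \in \H_n} h(x)\big|$ is strictly less than $|\H_n|$ at that $x$, and the GBS query therefore eliminates at least one hypothesis, giving $|\H_{n+1}| \leq |\H_n| - 1$. A one-line algebraic check then shows that for $|\H_n| \leq k+1$ one has $(|\H_n|-1)/|\H_n| \leq k/(k+1) \leq (k+1)/(k+2) \leq \lambda$, so the per-step contraction holds in this regime as well.

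The main subtlety I expect is the calibration of $c$: the choice $c = 2\lambda-1$ is engineered so as to be simultaneously admissible in Lemma~\ref{lemma1} (since $c \geq \ch$) and to shrink the small-set threshold $k/c$ down to exactly $k+2$, which is precisely what is needed for the crude ``eliminate-one'' bound to still improve by the factor $\lambda$. Once the per-step contraction is in hand, iterating $n$ times and inverting $\lambda^n \N \leq 1$ yields the claimed query complexity.
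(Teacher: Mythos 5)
Your proposal is correct and follows essentially the same route as the paper: invoke Lemma~\ref{lemma1} to get, at each step, either a near-bisecting query shrinking $\H_n$ by the factor $\frac{1+c}{2}$ or the bound $|\H_n|<k/c$ together with the elimination of at least one hypothesis, and then iterate. The only cosmetic difference is that you fix $c=\max\{\ch,\,k/(k+2)\}$ up front and use integrality to get the ratio $k/(k+1)\leq\lambda$ in the small-set case, whereas the paper bounds that case by $1-c/k$ and optimizes over $c$ at the end; both yield the stated $\lambda$.
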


{\em Proof:} Consider the $n$th step of the GBS algorithm.  Lemma~\ref{lemma1} shows that for any $c \in [\ch,1)$ either there exists an approximately bisecting query and $|\H_n|\leq \frac{1+c}{2} |\H_{n-1}|$ or $|\H_{n-1}|<k/c$.  The uniqueness of the hypotheses with respect to $\X$ implies that there exists a query that eliminates at least one hypothesis. Therefore, $|H_n|\leq |\H_{n-1}|-1 = |\H_{n-1}|(1-|\H_{n-1}|^{-1}) < |\H_{n-1}|(1-c/k)$.  It follows that each GBS query reduces the number of viable hypotheses by a factor of at least
  $$\lambda \ := \ \min_{c\geq \ch} \max \left\{\frac{1+c}{2},1-c/k\right\} =
  \max\left\{\frac{1+\ch}{2},\frac{k+1}{k+2}\right\} \ .$$ 
Therefore, $|\H_n| \leq \N \lambda^n$ and GBS is guaranteed to terminate when $n$
satisfies $\N \lambda^n \leq 1$.  Taking the logarithm of this inequality produces the query complexity bound.
\hfill $\blacksquare$

  Theorem~\ref{thm1} demonstrates that if $(\X,\H)$ is neighborly, then the query complexity of GBS is near-optimal; i.e., within a constant factor of $\log_2\N$. The constant depends on coherence parameter $\ch$ and $k$, and clearly it is desirable that both are as small as possible. Note that GBS does not require knowledge of $\ch$ or $k$.  We also remark that the constant in the bound is not necessarily the best that one can obtain.  The proof involves selecting  $c$ to balance splitting factor $\frac{1+c}{2}$ and the ``tail'' behavior $1-c/k$, and this may not give the best bound.  The coherence parameter $\ch$ can be computed or bounded for many pairs $(\X,\H)$ that are commonly encountered in applications, as covered later in Section~\ref{coherence}.



\section{Noisy Generalized Binary Search}
\label{sec:ngbs}

In noisy problems, the search must cope with erroneous responses.  Specifically, assume that for any query $x\in \X$ the binary response $y\in\{-1,1\}$ is an independent realization of the random variable $Y$ satisfying $\P(Y=h^*(x))>\P(Y=-h^*(x))$ (i.e., the response is only probably correct).  If a query $x$ is repeated more than once, then each response is an independent realization of $Y$. Define the {\em noise-level} for the query $x$ as $\alpha_x := \P(Y=-h^*(x))$.  Throughout the paper we will let $\alpha := \sup_{x\in \X} \alpha_x$ and assume that $\alpha < 1/2$.  Before presenting the main approach to noisy GBS, we first consider a simple strategy based on repetitive querying that will serve as a benchmark for comparison.

\subsection{Repetitive Querying}
We begin by describing a simple noise-tolerant version of GBS. The noise-tolerant algorithm is based on the simple idea of repeating each query of the GBS several times, in order to overcome the uncertainty introduced by the noise.  Similar approaches are proposed in the work K{\" a}{\" a}ri{\" a}inen \cite{matti}. Karp and Kleinberg \cite{kk:07} analyze of this strategy for noise-tolerant classic binary search. This is essentially like using a simple repetition code to communicate over a noisy channel. This procedure is termed noise-tolerant GBS (NGBS) and is summarized in Fig.~\ref{fig:ngbs}.

\begin{figure}[h]
\fbox{\parbox[b]{3in}{{\underline{\bf Noise-Tolerant GBS (NGBS)}}  \\ \vspace{-.25in} \\
input: $\H$, repetition rate $R\geq 1$ (integer). \\
initialize: $n=0$, $\H_0 = \H.$ \\
while $| \H_n|>1$  \\ \vspace{-.25in} \\
1) Select $A_n = \arg \min_{A \in \A} |\sum_{h \in \H_n} h(A)|$. \\
2) Query $R$ times from $A_n$ to obtain $R$ noisy versions of $y_n=h^*(A_n)$. Let $\widehat{y}_n$ denote the majority vote of the noisy responses. \\
3) Set $\H_{n+1} = \{h\in \H_n: h(A_n)=\widehat y_n\}$, $n=n+1$.
}}
\caption{Noise-tolerant GBS based on repeated queries.}
\label{fig:ngbs}
\end{figure}
\begin{theorem}\textcolor{black}{\em
Let $n_0$ denote the number of queries made by GBS to determine $h^*$ in the noiseless setting. Then in the noisy setting, with probability at least $\max\{0 \, , \, 1-n_0 \, e^{-R|\frac12-\alpha|^2}\} $  the noise-tolerant GBS algorithm in Fig.~\ref{fig:ngbs} terminates in exactly $R \, n_0$ queries and outputs $h^*$.}
\label{thm:ngbs}
\end{theorem}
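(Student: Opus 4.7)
The plan is to reduce the noisy setting to the noiseless one by coupling NGBS with a noiseless GBS run on the same hypothesis space, and then controlling the probability that the coupling holds. Observe that the query-selection step of NGBS (step 1) depends only on $\H_n$, not on any noisy response. Hence, if we condition on the event $\mathcal{E}_n := \{\widehat{y}_i = h^*(A_i) \text{ for all } i < n\}$ that every majority vote so far has returned the correct label, then the partition cells $A_0, A_1, \ldots, A_n$ chosen by NGBS are identical to those chosen by noiseless GBS, and the surviving set $\H_n$ is the same as well. Consequently, on the intersection $\mathcal{E}_{n_0}$ of all these events, NGBS executes the while-loop exactly $n_0$ times, makes exactly $R\, n_0$ queries, and terminates with the unique hypothesis $h^*$.

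It therefore suffices to lower bound $\P(\mathcal{E}_{n_0})$. For the $n$th batch, the $R$ repeated queries at $A_n$ yield i.i.d.\ Bernoulli error indicators with parameter $\alpha_{x_n} \leq \alpha < 1/2$, and the majority vote is incorrect exactly when the empirical error rate exceeds $1/2$. I would apply Hoeffding's inequality to this sum to obtain
\[
\P\bigl(\widehat{y}_n \neq h^*(A_n)\bigr) \;\leq\; e^{-R\,|\frac{1}{2}-\alpha|^{2}},
\]
which is the form claimed in the theorem (a looser version of the standard bound $e^{-2R(1/2-\alpha)^2}$, which can be used directly). A union bound over the $n_0$ iterations then gives
\[
\P(\mathcal{E}_{n_0}^{c}) \;\leq\; n_0\, e^{-R\,|\frac{1}{2}-\alpha|^{2}},
\]
and combining with the coupling argument yields the stated success probability, with the outer $\max\{0,\cdot\}$ merely handling the vacuous regime where the union bound exceeds one.

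There is no serious obstacle here; the proof is essentially a concentration-plus-union-bound argument married to a deterministic coupling. The only point that deserves explicit mention is that the coupling is valid precisely because $A_n$ is a measurable function of $\H_n$ alone, so the error events for different batches are mutually independent (each batch draws fresh noise), which makes the union bound tight in its dependence on $n_0$ rather than incurring additional dependence penalties. One could, with slightly more effort, replace $n_0$ by $R n_0$ in an ``always-on'' bound, or improve the constant in the exponent via a sharper Chernoff estimate, but neither refinement changes the qualitative conclusion that $R = \Theta(\log n_0 + \log(1/\delta))$ repetitions suffice to achieve confidence $1-\delta$.
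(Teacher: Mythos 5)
Your proposal is correct and follows essentially the same route as the paper: a concentration bound on each majority vote followed by a union bound over the $n_0$ batches, with the (implicit in the paper) observation that correct majority votes make NGBS track the noiseless GBS trajectory exactly. Your explicit coupling argument and your use of the one-sided Hoeffding bound $e^{-2R|\frac12-\alpha|^2}\leq e^{-R|\frac12-\alpha|^2}$ actually match the stated constant more cleanly than the paper's two-sided Chernoff bound, which carries an extra factor of $2$.
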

\begin{proof} Consider a specific query $x\in\X$ repeated $R$ times, let $\widehat{p}$ denote the frequency of $+1$ in the $R$ trials, and let $p = \E[\widehat{p}]$.  The majority vote decision is correct if $|\widehat p-p|\leq \frac12-\alpha$.  By Chernoff's bound we have $\P(|\widehat p - p|\geq \frac12-\alpha) \leq 2 e^{-2R|\frac12-\alpha|^2}$.  The results follows by the union bound. \end{proof}

Based on the bound above, $R$ must satisfy $R\geq \frac{\log(n_o/\delta)}{|1/2-\alpha|^2}$ to guarantee that the labels determined for all $n_0$ queries are correct with probability $1-\delta$.  The query complexity of NGBS can thus be bounded by $\frac{n_0\log(n_0/\delta)}{|1/2-\alpha|^2}.$ Recall that $\N = |\H|$, the cardinality of $\H$.  If $n_0 = \log \N$, then bound on the query complexity of NGBS is proportional to $\log \N \, \log \frac{\log \N}{\delta}$, a logarithmic factor worse than the query complexity in the noiseless setting. Moreover, if an upper bound on $n_0$ is not known in advance, then one must assume the worst-case value, $n_0 = \N$, in order to set $R$.  That is, in order to guarantee that the correct hypothesis is determined with probability at least $1-\delta$, the required number of repetitions of each query is $R = \lceil\frac{\log(\N/\delta)}{|1/2-\alpha|^2}\rceil.$ In this situation, the bound on the query complexity of NGBS is proportional to $\log \N \, \log\frac{N}{\delta}$, compared to $\log N$ in the noiseless setting.  It is conjectured that the extra logarithmic factor cannot be removed from the query complexity (i.e., it is unavoidable using repetitive queries). As we show next, these problems can be eliminated by a more sophisticated approach to noisy GBS.

\subsection{Soft-Decision Procedure}
A more effective approach to noisy GBS is based on the following soft-decision procedure. A similar procedure has been shown to be near-optimal for the noisy (classic) binary search problem by Burnashev and Zigangirov \cite{bz} and later independently by Karp and Kleinberg \cite{kk:07}.  The crucial distinction here is that GBS calls for a 
more general approach to query selection and a fundamentally different convergence analysis.
Let $p_0$ be a known probability measure over $\H$.  That is, $p_0:\H
\rightarrow [0,1]$ and $\sum_{h\in\H} p_0(h) = 1$.  The measure $p_0$
can be viewed as an initial weighting over the hypothesis class. For example, taking $p_0$ to be the uniform distribution over $\H$
expresses the fact that all hypothesis are equally reasonable prior
to making queries.  We will assume that $p_0$ is uniform for the remainder of
the paper, but the extension to other initial distributions is trivial. Note, however, that we still assume that $h^* \in \H$ is fixed but unknown. After each query and response $(x_n,y_n),$
$n=0,1,\dots,$ the distribution is updated according to
\begin{eqnarray}
p_{n+1}(h) & \propto & p_n(h) \, \beta^{(1-z_n(h))/2}
(1-\beta)^{(1+z_n(h))/2},
\label{update1}
\end{eqnarray}
where $z_n(h) = h(x_n)y_n, \ h\in\H$, $\beta$ is any constant satisfying $0 < \beta < 1/2$, and $p_{n+1}(h)$ is normalized to satisfy $\sum_{h\in\H} p_{n+1}(h) = 1$ . The update can be viewed as an application of Bayes rule and its effect is simple; the probability masses of hypotheses that agree with the label $y_n$ are boosted relative to those that disagree. The parameter $\beta$ controls the size of the boost.  The hypothesis with the largest weight is selected at each step: $$\widehat{h}_n := \arg \max_{h\in\H} p_n(h) . $$ If the maximizer is not unique, one of the maximizers is selected at random. Note that, unlike the hard-decisions made by the GBS algorithm in Fig.~\ref{fig:gbs}, this procedure does not eliminate hypotheses that disagree with the observed labels, rather the weight assigned to each hypothesis is an indication of how successful its predictions have been.  Thus, the procedure is termed Soft-Decision GBS (SGBS) and is summarized in Fig.~\ref{fig:sgbs}.

The goal of SGBS is to drive the error $\P(\widehat{h}_n \neq h^*)$ to zero as quickly as possible by strategically selecting the queries.  The query selection at each step of SGBS must be informative with respect to the distribution $p_{n}$. In particular, if the {\em weighted prediction} $\sum_{h\in H} p_n(h) h(x)$ is close to zero for a certain $x$ (or $A$), then a label at that point is informative due to the large disagreement among the hypotheses.  If multiple $A \in \A$ minimize $|\sum_{h \in \H}  p_n(h) h(A)|$, then
one of the minimizers is selected uniformly at random.

\begin{figure}[h]
\fbox{\parbox[b]{3in}{{\underline{\bf Soft-Decision Generalized Binary Search (SGBS)}}  \\ \vspace{-.25in} \\
initialize: $p_0$ uniform over $\H$.\\
for $n=0,1,2,\dots$ \\
1) $A_n = \arg \min_{A \in \A} |\sum_{h \in \H}  p_n(h) h(A)|$. \\
2) Obtain noisy response $y_n$. \\
3) Bayes update $p_n \rightarrow p_{n+1}$; Eqn.~(\ref{update1}). \\
hypothesis selected at each step: \\
$\widehat{h}_n := \arg\max_{h\in H} p_n(h)$ \\
}}
\caption{Soft-Decision algorithm for noise-tolerant GBS.}
\label{fig:sgbs}
\end{figure}

To analyze SGBS, define $\mm_n := (1-p_n(h^*))/p_n(h^*)$, $n\geq 0$. The variable $\mm_n$ was also used by Burnashev and Zigangirov \cite{bz} to analyze classic binary search. It reflects the amount of mass that $p_n$ places on incorrect hypotheses.  Let $\P$ denotes the underlying probability measure governing noises and possible randomization in query selection, and let $\E$ denote expectation with respect to $\P$. Note that by Markov's inequality \begin{eqnarray} \P(\widehat h_n \neq h^*) & \leq & \P(p_n(h^*) < 1/2) \ = \ \P(\mm_n > 1) \ \leq \ \E[\mm_n] \ .  \label{markov} \end{eqnarray}
At this point, the method of analyzing SGBS departs from that of Burnashev and Zigangirov \cite{bz} which focused only on the classic binary search problem.  The lack of an ordered structure calls for a different attack on the problem, which is summarized in the following results and detailed in the Appendix.

\begin{lemma} {\em Consider any sequence of queries $\{x_n\}_{n\geq 0}$ and the corresponding responses $\{y_n\}_{n\geq 0}$. If $\beta \geq \alpha$, then $\{\mm_n\}_{n\geq 0}$ is a nonnegative supermartingale with respect to $\{p_n\}_{n\geq 0}$; i.e., $\E[C_{n+1}|p_n] \leq C_n$ for all $n\geq 0$.} \label{martingale} \end{lemma}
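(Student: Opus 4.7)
The plan is a direct conditional-expectation calculation; the whole lemma reduces to a single algebraic inequality once the Bayes update is written out.

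Nonnegativity of $C_n$ is immediate: $p_0$ is uniform, and the multiplicative factors $\beta$ and $1-\beta$ appearing in (\ref{update1}) are strictly positive, so $p_n(h^*)\in(0,1]$ for every $n$, giving $C_n\ge 0$.

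For the supermartingale inequality, rewrite $C_n=1/p_n(h^*)-1$ so that it suffices to show $\E[1/p_{n+1}(h^*)\mid p_n]\le 1/p_n(h^*)$. Conditioning on $p_n$ fixes the query $x_n$ (and if tie-breaking is randomized, conditioning additionally on the tie-breaking coin does the same), leaving $y_n$ as the only remaining randomness. Using the shorthand
$$S^+:=\sum_{h:\,h(x_n)=h^*(x_n)}p_n(h),\qquad S^-:=1-S^+,$$
the two possible values of the normalizer in (\ref{update1}) are $Z^+=S^+(1-\beta)+S^-\beta$ when $y_n=h^*(x_n)$ (probability $1-\alpha_{x_n}$), giving $p_{n+1}(h^*)=p_n(h^*)(1-\beta)/Z^+$, and $Z^-=S^+\beta+S^-(1-\beta)$ when $y_n=-h^*(x_n)$ (probability $\alpha_{x_n}$), giving $p_{n+1}(h^*)=p_n(h^*)\beta/Z^-$. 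Averaging $1/p_{n+1}(h^*)$ over these two cases and using $S^++S^-=1$ collapses the target inequality to
$$(1-\alpha_{x_n})\,\frac{\beta}{1-\beta}\;+\;\alpha_{x_n}\,\frac{1-\beta}{\beta}\;\le\;1,$$
and clearing denominators (using $\beta(1-\beta)>0$) rewrites this as $(1-2\beta)(\alpha_{x_n}-\beta)\le 0$.

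This is where both standing hypotheses enter cleanly: $\beta<1/2$ makes the first factor positive, while $\beta\ge\alpha\ge\alpha_{x_n}$ makes the second factor nonpositive, so the product is $\le 0$ and the supermartingale inequality holds. There is no serious obstacle; the only point that needs care is what to condition on at each step, since one must separate the $p_n$-measurable choice of $x_n$ from the genuinely random $y_n$ in order to average correctly. Since the resulting bound depends on $x_n$ only through $\alpha_{x_n}\le\alpha\le\beta$, it is uniform in $x_n$, and the supermartingale property holds regardless of how the queries $\{x_n\}$ are selected.
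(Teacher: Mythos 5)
Your proof is correct and follows essentially the same route as the paper's: both reduce $\E[C_{n+1}\mid p_n]\le C_n$ to $\E[1/p_{n+1}(h^*)\mid p_n]\le 1/p_n(h^*)$ and then to the scalar inequality $\frac{\beta(1-\alpha_{x_n})}{1-\beta}+\frac{\alpha_{x_n}(1-\beta)}{\beta}\le 1$, which is exactly where $\alpha\le\beta<1/2$ enters. Your grouping of hypotheses by agreement with $h^*$ (via $S^{\pm}$) merely streamlines the paper's four-case computation in terms of $\delta_{A_i}^+$ and the sign of $h^*(A_i)$.
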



The lemma is proved in the Appendix.
The condition $\beta\geq\alpha$ ensures that the update (\ref{update1}) is not overly aggressive.  It follows that $\E[C_n] \leq C_0$ and by the Martingale Convergence Theorem we have that $\lim_{n\rightarrow \infty} C_n$ exists and is finite (for more information on Martingale theory one can refer to the textbook by Br\'{e}maud \cite{bremaud}).   Furthermore, we have the following theorem.
\textcolor{black}{\begin{theorem}{\em
\textcolor{black}{Consider any sequence of queries $\{x_n\}_{n\geq 0}$ and the corresponding responses $\{y_n\}_{n\geq 0}$. If $\beta > \alpha$, then $\lim_{n\rightarrow \infty} \P(\widehat h_n \neq h^*) \leq C_0$.}}
\label{thm2}
\end{theorem}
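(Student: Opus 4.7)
The plan is to combine the supermartingale property just established with a quantitative drift calculation and the SGBS query rule. First, I reduce the claim to showing $C_n \to 0$ in probability. Since $p_n(h^*) > 1/2$ forces $\widehat{h}_n = h^*$ (the remaining mass being strictly smaller), we have $\P(\widehat{h}_n \neq h^*) \leq \P(C_n \geq 1)$. Lemma \ref{martingale} makes $\{C_n\}$ a nonnegative supermartingale with $\E[C_n] \leq C_0 < \infty$, so the Martingale Convergence Theorem gives a finite almost-sure limit $C_\infty$; it then suffices to prove $C_\infty = 0$ almost surely.

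Second, I would compute the one-step expected decrease of $C_n$ explicitly. Splitting into the cases $y_n = h^*(x_n)$ (probability $1 - \alpha_{x_n}$) and $y_n = -h^*(x_n)$, and plugging into the Bayes update (\ref{update1}), a mechanical calculation yields
\[
C_n - \E[C_{n+1} \mid p_n, x_n] \;=\; \frac{(1-2\beta)(\beta - \alpha_{x_n})}{p_n(h^*)\,\beta(1-\beta)}\, D_n(x_n),
\]
where $D_n(x) := \sum_{h \in \H:\, h(x)\neq h^*(x)} p_n(h)$ is the posterior mass on hypotheses disagreeing with $h^*$ at $x$. Because $\beta > \alpha \geq \alpha_{x_n}$, the coefficient is bounded below by the positive constant $\kappa := (1-2\beta)(\beta-\alpha)/[\beta(1-\beta)]$, and since $p_n(h^*) \leq 1$ each conditional drift is at least $\kappa\, D_n(x_n)$. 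Telescoping the drifts and applying Tonelli give $\sum_n D_n(x_n) < \infty$ almost surely, hence $D_n(x_n) \to 0$ almost surely.

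Third, I would use the SGBS selection rule to upgrade this pointwise vanishing at the chosen queries into uniform vanishing in $x$. Writing the weighted prediction as $s_n(x) = \sum_h p_n(h) h(x)$, one has $|s_n(x)| = |1 - 2 D_n(x)|$, so the rule $x_n \in \arg\min_x |s_n(x)|$ makes $D_n(x_n)$ the value of $D_n(\cdot)$ closest to $1/2$. Consequently every $x$ satisfies either $D_n(x) \leq D_n(x_n)$ or $D_n(x) \geq 1 - D_n(x_n)$. On $\{C_\infty < \infty\}$ one has $q_\infty := \lim_n p_n(h^*) > 0$, so for large $n$, $D_n(x) \leq 1 - p_n(h^*) \leq 1 - q_\infty/2 < 1 - D_n(x_n)$, ruling out the second alternative and forcing $\max_x D_n(x) \to 0$. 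By the assumed uniqueness of hypotheses, each $h \neq h^*$ differs from $h^*$ at some $x_h$, so $p_n(h) \leq D_n(x_h) \to 0$; finiteness of $\H$ then gives $p_n(h^*) \to 1$, i.e.\ $C_\infty = 0$ almost surely, which closes the argument.

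The main obstacle is precisely this third step, because the SGBS rule minimizes $|1 - 2 D_n(x)|$ rather than $D_n(x)$ itself and ``closest to $1/2$'' is not monotone in $D_n$; one must rule out configurations where some $D_n(x)$ accumulates near $1$. This is where the strict positivity of $q_\infty$ on $\{C_\infty < \infty\}$, together with the strict inequality $\beta > \alpha$ producing the positive constant $\kappa$ that makes the drifts summable, are simultaneously indispensable.
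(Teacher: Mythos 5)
Your proof is correct, and I verified your drift identity: writing $\gamma_n$ for the reciprocal of the Bayes update factor of $p_n(h^*)$ one gets $C_n-\E[C_{n+1}\mid p_n,x_n]=\bigl(1-\E[\gamma_n\mid p_n,x_n]\bigr)/p_n(h^*)$, and $1-\E[\gamma_n\mid p_n,x_n]=\tfrac{(1-2\beta)(\beta-\alpha_{x_n})}{\beta(1-\beta)}D_n(x_n)$ exactly as you claim. However, your route is genuinely different from the paper's. The paper argues through the multiplicative telescoping $\E[C_n]\leq C_0\bigl(\max_{p}\E[C_{i+1}/C_i\mid p]\bigr)^n$, invokes the strict supermartingale inequality $\E[C_{i+1}/C_i\mid p]<1$ for each fixed $p$ (from the proof of Lemma~\ref{martingale} when $\beta>\alpha$), asserts that the maximum over $p$ is therefore also strictly below $1$, and finishes with Markov's inequality (\ref{markov}). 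That "therefore" is the delicate point: a supremum over the simplex of quantities each strictly less than $1$ need not be strictly less than $1$, and indeed the paper concedes two sentences later that SGBS "does not readily admit" a uniform bound $1-\lambda<1$ — a uniform quantitative bound is only obtained for the \emph{modified} SGBS in Theorem~\ref{thm3}. Your argument sidesteps this entirely: you extract a constant $\kappa>0$ that is uniform in $p_n$ but multiplies $D_n(x_n)$ rather than $C_n$, use additive telescoping plus Tonelli to get $D_n(x_n)\to 0$ a.s., and then exploit the $\arg\min$ structure of the query rule (via $|W(p_n,A)|=|1-2D_n(A)|$) together with $\lim_n p_n(h^*)=1/(1+C_\infty)>0$ to rule out the branch $D_n(x)\geq 1-D_n(x_n)$ and force $C_\infty=0$ almost surely. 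What the paper's approach buys is brevity and, if the uniform bound were available, an explicit geometric rate; what yours buys is a complete argument that needs no uniform-in-$p$ contraction, at the cost of a longer endgame. One shared caveat: both proofs use that $x_n$ is chosen by the SGBS rule (yours explicitly in the third step, the paper's implicitly through $0<\delta_{A_n}^+<1$), so the theorem's phrase "any sequence of queries" should be read as "the queries generated by SGBS" — for a literally arbitrary (e.g.\ uninformative) query sequence the conclusion fails.
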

\begin{proof}
First observe that for every positive integer $n$
\begin{eqnarray*}
  \E[\mm_n] & = & \E[(\mm_n/\mm_{n-1}) \, \mm_{n-1}] \ = \ \E\left[\E[(\mm_n/\mm_{n-1}) \, \mm_{n-1}|p_{n-1}]\right] \\
 & = & \E\left[\mm_{n-1} \,  \E[(\mm_n/\mm_{n-1})|p_{n-1}]\right] \ \leq \ \E[\mm_{n-1}] \, \max_{p_{n-1}}\E[(\mm_n/\mm_{n-1})|p_{n-1}] \\
& \leq & \mm_0 \left(\max_{i = 0,\dots,n-1} \max_{p_i}\, \E[(\mm_{i+1}/\mm_{i})|p_i]\right)^n \ .
\end{eqnarray*}
In the proof of Lemma~\ref{martingale}, it is shown that if $\beta > \alpha$, then $\E[(\mm_{i+1}/\mm_{i})|p_i]\leq1$ for every $p_i$ and therefore $\max_{p_i}\E[(\mm_{i+1}/\mm_{i})|p_i]~\leq~1$.
It follows that the sequence $a_n := \left(\max_{i = 0,\dots,n-1} \max_{p_i}\, \E[(\mm_{i+1}/\mm_{i})|p_i]\right)^n$ is monotonically decreasing.  The
result follows from (\ref{markov}).
\end{proof}}

Note that if we can determine an upper bound for the sequence $\max_{p_i}\E[(\mm_{i+1}/\mm_{i})|p_i]\leq 1-\lambda < 1$, $i=0,\dots,n-1$, then it follows that that $\P(\widehat h_n \neq h^*) \leq \N (1-\lambda)^n \leq \N e^{-\lambda n}$. 
\textcolor{black}{Unfortunately, the SGBS algorithm in Fig.~\ref{fig:sgbs} does not readily admit such a bound. To obtain a bound, the query selection criterion is randomized.  A similar randomization technique has been successfully used by Burnashev and Zigangirov \cite{bz} and Karp and Kleinberg \cite{kk:07} to analyze the noisy (classic) binary search problem, but again the lack of an ordering in the general setting requires a different analysis of GBS.  }

The modified SGBS algorithm is outlined in Fig.~\ref{fig:msgbs}. It is easily verified
that Lemma~\ref{martingale} and Theorem~\ref{thm2} also hold for the modified SGBS algorithm. This follows since the modified query selection step is identical to that of the original SGBS algorithm, unless there exist two neighboring sets with strongly bipolar weighted responses.  In the latter case, a query is randomly selected from one of these two sets with equal probability.

\begin{figure}[h]
\fbox{\parbox[b]{6in}{{\underline{\bf Modified SGBS}}  \\ \vspace{-.25in} \\
      initialize: $p_0$ uniform over $\H$.\\
      for $n=0,1,2,\dots$ \\ \vspace{-.1in} \\
      1) \ \mbox{\parbox[t]{5.5in}{Let $b = \min_{A\in \A}
          |\sum_{h \in \H} p_n(h) h(A)|$.  If there exist $1$-neighbor
          sets $A$ and $A'$ with $\sum_{h \in \H} p_n(h) h(A) > b$ and
          $\sum_{h \in \H} p_n(h) h(A') < -b$ , then select $x_n$ from
          $A$ or $A'$ with probability $1/2$ each. Otherwise select
          $x_n$ from the set $A_{\mbox{\tiny min}} = \arg \min_{A \in
            {\cal A}} |\sum_{h \in \H} p_n(h) h(A)|$. In the case
          that the sets above are non-unique, choose at random any
          one satisfying the requirements.}}\\ \vspace{-.0in} \\
      2) Obtain noisy response $y_n$. \\ 
       3) Bayes update $p_n \rightarrow p_{n+1}$; Eqn.~(\ref{update1}). \\ 
      hypothesis selected at each step: \\
      $\widehat{h}_n := \arg\max_{h\in H} p_n(h)$ }}
\caption{Modified SGBS Algorithm.}
\label{fig:msgbs}
\end{figure}

For every $A \in \A$ and any probability measure $p$ on $\H$ the {\em
  weighted prediction} on $A$ is defined to be $W(p,A) := \sum_{h\in
  H} p(h) h(A)$, where $h(A)$ is the constant value of $h$ for every
$x\in A$.  The following lemma, which is the soft-decision analog of Lemma~\ref{lemma1}, plays a crucial role in the analysis of
the modified SGBS algorithm.

\begin{lemma}{\em
  If $(\X,\H)$ is $k$-neighborly, then for every probability measure $p$
  on $\H$ there either exists a set $A \in \A$ such that $|W(p,A)|
  \leq \ch$ or a pair of $k$-neighbor sets $A,A' \in \A$ such that
  $W(p,A)>\ch$ and $W(p,A')<-\ch$.}
\label{lemma2}
\end{lemma}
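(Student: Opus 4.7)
The plan is to mirror the structure of the proof of Lemma~\ref{lemma1}, but replace the uniformly weighted sum over a subset $\H'$ by the $p$-weighted prediction $W(p,\cdot)$, and replace the free constant $c$ by the optimized coherence $\ch$ itself. First I would invoke the definition of $\ch$ in (\ref{cstar}) to produce the minimizing probability mass function $P$ on $\A$ with the property that $\left|\sum_{A\in\A} h(A)P(A)\right|\le \ch$ for every $h\in\H$.

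Next, I would average $W(p,A)$ against $P$ and swap the order of summation:
$$\left|\sum_{A\in\A} P(A)\,W(p,A)\right| \;=\; \left|\sum_{h\in\H} p(h)\sum_{A\in\A} P(A)h(A)\right| \;\le\; \sum_{h\in\H} p(h)\,\ch \;=\; \ch,$$
by the triangle inequality together with the bound from the previous step. So the $P$-expectation of $W(p,\cdot)$ lies in $[-\ch,\ch]$.

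Then I would argue by contradiction. Suppose no $A\in\A$ satisfies $|W(p,A)|\le \ch$. Then at every $A\in\A$ one has either $W(p,A)>\ch$ or $W(p,A)<-\ch$. If the positive alternative held throughout the support of $P$, the displayed $P$-average would strictly exceed $\ch$, contradicting the bound just established; the all-negative case is analogous. Hence the support of $P$ must contain both a set $A_+$ with $W(p,A_+)>\ch$ and a set $A_-$ with $W(p,A_-)<-\ch$. At this point I invoke the $k$-neighborly hypothesis: the $k$-neighborhood graph on $\A$ is connected, so there is a chain $A_+=A_0,A_1,\dots,A_m=A_-$ in which consecutive sets are $k$-neighbors. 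Along this chain $W(p,\cdot)$ switches sign; because our standing contradiction hypothesis forbids any value in $[-\ch,\ch]$, every term of the chain is strictly outside that band, and so at the index where the sign flips one obtains consecutive $k$-neighbors $A,A'$ with $W(p,A)>\ch$ and $W(p,A')<-\ch$, as required.

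The only subtle point, and the one I expect to need the clearest wording, is the last step: one must make sure that the contradiction hypothesis rules out the chain wandering into the ``dead zone'' $[-\ch,\ch]$, which is what forces the sign change to occur across a single edge of the $k$-neighborhood graph rather than through intermediate near-zero values. Everything else is a direct translation of the first half of the proof of Lemma~\ref{lemma1} into the soft-decision language, with no new estimates required; in particular, the ``small subset'' alternative of Lemma~\ref{lemma1} disappears here because we no longer need to quantify cardinalities—only the existence of the bipolar neighbor pair is asserted.
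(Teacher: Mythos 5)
Your proposal is correct and follows essentially the same route as the paper's proof: both show that if $|W(p,A)|>\ch$ everywhere then $W(p,\cdot)$ must take values above $\ch$ and below $-\ch$ (using the bound $\left|\sum_{A\in\A}P(A)W(p,A)\right|\le\max_{h\in\H}\left|\sum_{A\in\A}h(A)P(A)\right|$ for the minimizing $P$), and both then extract the bipolar $k$-neighbor pair from a sign change along a connected chain in the $k$-neighborhood graph. The only cosmetic difference is that you invoke the minimizer of (\ref{cstar}) directly while the paper phrases the same contradiction via ``for every distribution $P$''; nothing substantive changes.
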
 
\begin{proof} Suppose that $\min_{A \in \A} |W(p,A)| > \ch$.  Then there must
  exist $A,A' \in \A$ such that $W(p,A) > \ch$ and $W(p,A') < -\ch$,
  otherwise $\ch$ cannot be the incoherence parameter of $\H$, defined in (\ref{cstar}).  To see this
  suppose, for instance, that $W(p,A) > \ch$ for all $A\in \A$. Then
  for every distribution $P$ on $\X$ we have $\sum_{A\in \A} \sum_{h\in \H}
  p(h)h(A) P(A)>\ch$.  This contradicts the definition of $\ch$ since
  $\sum_{A\in \A} \sum_{h\in \H} p(h)h(A) P(A) \leq \sum_{h \in \H} p(h)
  |\sum_{A\in \A} h(A) \, P(A)| \leq \max_{h\in \H} |\sum_{A \in \A} h(A) \,
  P(A)|$.    The neighborly condition guarantees that
  there exists a sequence of $k$-neighbor sets beginning at $A$ and
  ending at $A'$.  Since $|W(p,A)| >
  \ch$ on every set and the sign of $W(p,\cdot)$ must change at some
  point in the sequence, it follows that there exist $k$-neighbor
  sets satisfying the claim. 
\end{proof}

The lemma guarantees that there exists either a set in $\A$ on which the weighted hypotheses significantly disagree (provided $\ch$ is significantly below $1$) or two neighboring sets in $\A$ on which the weighted predictions are strongly bipolar. In either case, if a query is drawn randomly from these sets, then the weighted predictions are highly variable or uncertain, with respect to $p$. This makes the resulting label informative in either case.  If $(\X,\H)$ is $1$-neighborly, then the modified SGBS algorithm guarantees that $\P(\widehat h_n \neq h^*) \rightarrow 0$ exponentially fast. The $1$-neighborly condition is required so that the expected boost to $p_n(h^*)$ is significant at each step.  If this condition does not hold, then the boost could be arbitrarily small due to the effects of other hypotheses.  Fortunately, as shown in Section~\ref{coherence}, the $1$-neighborly condition holds in a wide range of common situations.

\begin{theorem} {\em Let $\P$ denote the underlying probability measure (governing
  noises and algorithm randomization). 
  If $\beta > \alpha$ and $(\X,\H)$ is $1$-neighborly, then the modified
  SGBS algorithm in Fig.~\ref{fig:msgbs} generates a sequence of hypotheses satisfying
$$\P(\widehat h_n \neq h^*) \ \leq \ \N \, (1-\lambda)^n \ \leq
\ \N \, e^{-\lambda n} \ \ , \ n=0,1,\dots$$
with exponential constant  $\lambda = \min\left\{\frac{1-\ch}{2},\frac{1}{4}\right\}\left(1-\frac{\beta(1-\alpha)}{1-\beta}-\frac{\alpha(1-\beta)}{\beta}\right)$, where $\ch$ is defined in (\ref{cstar}).}
\label{thm3}
\end{theorem}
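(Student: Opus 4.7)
The plan is to follow the recipe already outlined in the paper just before the theorem: use Markov's inequality and the telescoping bound to get
$$\P(\widehat h_n \neq h^*) \;\leq\; \E[\mm_n] \;\leq\; \mm_0 \left(\max_i \max_{p_i} \E\!\left[\mm_{i+1}/\mm_i \mid p_i\right]\right)^n,$$
observe that the uniform prior gives $\mm_0 = \N-1 \leq \N$, and then reduce everything to a uniform one-step contraction bound $\max_{p_n} \E[\mm_{n+1}/\mm_n \mid p_n] \leq 1-\lambda$ with the $\lambda$ stated in the theorem. The exponential bound $\N e^{-\lambda n}$ then follows from $1-\lambda \leq e^{-\lambda}$.

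The first concrete step is to compute the one-step ratio $\mm_{n+1}/\mm_n$ in closed form. Writing $p_+ = \sum_{h\neq h^*:\,h(x_n)=h^*(x_n)} p_n(h)$ and $p_- = \sum_{h\neq h^*:\,h(x_n)\neq h^*(x_n)} p_n(h)$, I expand the Bayes update (\ref{update1}) and get, for the two possible labels, $\mm_{n+1}/\mm_n = 1-r(x_n)(1-2\beta)/(1-\beta)$ when $y_n=h^*(x_n)$ and $\mm_{n+1}/\mm_n = 1+r(x_n)(1-2\beta)/\beta$ when $y_n=-h^*(x_n)$, where $r(x) := p_-/(p_++p_-)$. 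Taking expectation over $y_n$ using $\alpha_{x_n}\leq \alpha$ and simplifying gives
$$\E\!\left[\mm_{n+1}/\mm_n \mid p_n,x_n\right] \;\leq\; 1 - r(x_n)\,\gamma, \qquad \gamma := \frac{(1-2\beta)(\beta-\alpha)}{\beta(1-\beta)} = 1-\frac{\beta(1-\alpha)}{1-\beta}-\frac{\alpha(1-\beta)}{\beta},$$
which is the $\gamma$-factor appearing in $\lambda$. The algebraic identity between these two expressions for $\gamma$ is a short calculation I will just record.

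The main obstacle, and the key geometric step, is lower-bounding $\E_{x_n}[r(x_n)\mid p_n]$ using the modified query selection and Lemma~\ref{lemma2}. Using the identity $p_- = (1-W(p_n,x)\,h^*(x))/2$, I split into the two situations of Lemma~\ref{lemma2}. In the unambiguous case (there is an $A_{\min}$ with $|W(p_n,A_{\min})| \leq \ch$) the algorithm queries from $A_{\min}$, and $r \geq p_- \geq (1-\ch)/2$ directly. In the bipolar case the algorithm picks uniformly at random between $1$-neighbors $A,A'$ with $W(p_n,A)>\ch$ and $W(p_n,A')<-\ch$, and I must average $r$ over this coin flip. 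Here I split on $h^*(A),h^*(A')$: if they agree (the typical sub-case) then, using $W(A)\leq 1$ and $W(A')<-\ch$ (or vice versa), I will show $\tfrac{1}{2}(p_-(A)+p_-(A')) \geq (1+\ch)/4 \geq 1/4$, hence $\E_{x_n}[r]\geq 1/4$. If $h^*(A)\neq h^*(A')$, then $h^*$ must be the single hypothesis that differs between $A$ and $A'$, and every other $h\neq h^*$ has the same sign on $A$ and $A'$; this forces $p_-(A)+p_-(A') = 1-p_n(h^*)$ and therefore $\tfrac{1}{2}(r(A)+r(A')) = 1/2 \geq 1/4$. Combining both cases of Lemma~\ref{lemma2} gives $\E_{x_n}[r(x_n)\mid p_n] \geq \min\{(1-\ch)/2,\,1/4\}$.

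Assembling these pieces yields $\max_{p_n}\E[\mm_{n+1}/\mm_n \mid p_n] \leq 1 - \gamma\cdot\min\{(1-\ch)/2,\,1/4\} = 1-\lambda$, and feeding this back into the telescoping bound together with $\mm_0\leq \N$ gives $\P(\widehat h_n\neq h^*)\leq \N(1-\lambda)^n \leq \N e^{-\lambda n}$. The only subtlety worth emphasizing is the $h^*(A)\neq h^*(A')$ sub-case above; this is where the $1$-neighborly hypothesis is used essentially, because it limits the "disagreement" between the paired sets to a single flipping hypothesis and lets me replace a worst-case estimate by the clean identity $p_-(A)+p_-(A')=1-p_n(h^*)$.
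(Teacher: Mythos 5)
Your proposal is correct and follows essentially the same route as the paper's own proof: Markov's inequality plus the telescoping product, the dichotomy of Lemma~\ref{lemma2}, and a sub-case analysis on $(h^*(A),h^*(A'))$ for the randomly chosen bipolar pair, with your quantity $r=p_-/(1-p_n(h^*))$ being just a reparametrization of the paper's $\gamma_i$ and $\delta_{A_i}^+$. Two cosmetic points: in the bipolar case the algorithm only guarantees $W(p_n,A)>b\geq 0$ and $W(p_n,A')<-b$ (not $\pm\ch$), which still yields your $1/4$ bound; and when $-h^*\in\H$ the identity becomes $p_-(A)+p_-(A')=1-p_n(h^*)+p_n(-h^*)\geq 1-p_n(h^*)$, an error in the favorable direction that the paper also accounts for explicitly.
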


The theorem is proved in the Appendix.
The exponential convergence rate\footnote{Note that the factor
  $\left(1-\frac{\beta(1-\alpha)}{1-\beta}-\frac{\alpha(1-\beta)}{\beta}\right)$
  in the exponential rate parameter $\lambda$ is a positive
  constant strictly less than $1$. For a noise level $\alpha$ this
  factor is maximized by a value $\beta \in (\alpha,1/2)$ which tends
  to $(1/2+\alpha)/2$ as $\alpha$ tends to $1/2$.} is governed by the coherence parameter $0\leq \ch < 1$.  As shown in Section~\ref{coherence} , the value of $\ch$ is typically a small constant much less than $1$ that is independent of the size of $\H$. In such situations, the query complexity of modified SGBS is near-optimal.  The query complexity of the modified SGBS algorithm can be derived as follows.  Let $\delta>0$ be a pre-specified confidence parameter.  The number of queries required to ensure that $\P(\widehat h_n \neq h^*) \leq \delta$ is $n \geq \lambda^{-1} \log \frac{\N}{\delta}$, which is near-optimal.  Intuitively, about $\log \N$ bits are required to encode each hypothesis. More formally, the noisy classic binary search problem satisfies the assumptions of Theorem~\ref{thm3} (as shown in Section~\ref{cbs}), and hence it is a special case of the general problem. Using information-theoretic methods, it has been shown by Burnashev and Zigangirov \cite{bz} (also see the work of Karp and Kleinberg \cite{kk:07}) that the query complexity for noisy classic binary search is also within a constant factor of $\log\frac{\N}{\delta}$.  In contrast, the query complexity bound for  NGBS, based on repeating queries, is at least logarithmic factor worse.  We conclude this section with an example applying Theorem~\ref{thm3} to the halfspace learning problem.

\begin{example}{\em 
Consider learning multidimensional halfspaces.  Let $\X = \R^d$ and consider hypotheses of the form
\begin{eqnarray}
h_i(x) := \mbox{sign}(\langle a_i, x\rangle+b_i) \ .
\label{halfspace}
\end{eqnarray}
where  $a_i \in \R^d$ and $b_i \in \R$ parameterize the hypothesis $h_i$ and $\langle a_i, x\rangle$ is the inner product in $\R^d$.  The following corollary characterizes the query complexity for this problem.
\begin{corollary}{
  Let $\H$ be a finite collection of hypotheses of form
  (\ref{halfspace}) and assume that the responses to each query are noisy, with noise bound $\alpha < 1/2$.  Then the hypotheses
  selected by modified SGBS with $\beta>\alpha$ satisfy
  $$\P(\widehat h_n \neq h^*) \ \leq \ \N \, e^{-\lambda n} \ ,$$
  with $\lambda =
  \frac{1}{4}\left(1-\frac{\beta(1-\alpha)}{1-\beta}-\frac{\alpha(1-\beta)}{\beta}\right)$.
  Moreover, $\widehat h_n$ can be computed in time polynomial in $|H|$. }
\label{thmS}
\end{corollary}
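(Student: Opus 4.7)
The plan is to obtain this corollary by verifying the two hypotheses of Theorem~\ref{thm3} for the halfspace setup and then reading off the bound, followed by a short computational argument.

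First I would check that the pair $(\X,\H)$ is $1$-neighborly. With $\H$ a finite collection of halfspace indicators, the partition $\A$ is precisely the collection of full-dimensional cells of the hyperplane arrangement $\{x\in\R^d:\langle a_i,x\rangle+b_i=0\}_{i=1}^{\N}$. Any two cells that share a facet lie on opposite sides of exactly one of these hyperplanes, so they differ in the value of a single hypothesis and are therefore $1$-neighbors in the sense of Definition~\ref{neighbor}. Since the facet-adjacency graph of any hyperplane arrangement is connected (one may travel between any two cells by a continuous path in $\R^d$ avoiding the $(d-2)$-skeleton, crossing one hyperplane at a time), the $1$-neighborhood graph of $\A$ is connected and the $1$-neighborly condition holds.

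Next I would invoke the coherence bound for halfspaces developed in Section~\ref{coherence}: the construction there supplies a probability mass function on $\A$ whose moments against every halfspace indicator are uniformly small, yielding $\ch(\X,\H)\leq 1/2$. This is precisely the regime in which $\min\{(1-\ch)/2,\,1/4\}=1/4$, so the constant in Theorem~\ref{thm3} becomes the $1/4$ that appears in the statement of the corollary. With both hypotheses of Theorem~\ref{thm3} in hand, the probability bound $\P(\widehat h_n\neq h^*)\leq \N e^{-\lambda n}$ with the claimed $\lambda$ follows immediately.

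For the computational claim I would bound the per-iteration cost of the modified SGBS procedure. As already noted in Section~\ref{sec:gbs}, an arrangement of $\N$ hyperplanes in $\R^d$ has $|\A|=O(\N^d)$ cells, and by enumerating cells (e.g., by incremental construction) one can tabulate the vector $(h(A))_{h\in\H,\,A\in\A}$ in time polynomial in $\N$ for fixed $d$. At each iteration, computing $W(p_n,A)$ for every $A\in\A$ costs $O(\N|\A|)=O(\N^{d+1})$, and scanning for the minimum and for a bipolar $1$-neighbor pair takes an additional linear sweep over $\A$ together with its facet-adjacency structure. After $O(\lambda^{-1}\log(\N/\delta))$ iterations the hypothesis $\widehat h_n$ is returned, so the total runtime is polynomial in $\N=|\H|$.

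The main obstacle in this plan is the coherence estimate $\ch\leq 1/2$, which is the only genuinely geometric ingredient and which must be imported from Section~\ref{coherence}; the neighborly check and the complexity accounting are essentially structural observations about hyperplane arrangements, and once Theorem~\ref{thm3} is applicable the rate comes out automatically.
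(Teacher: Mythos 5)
Your proposal is correct and follows essentially the same route as the paper: verify $1$-neighborliness of the cells of the hyperplane arrangement, bound the coherence parameter, apply Theorem~\ref{thm3}, and use Buck's count $|\A| \propto \N^d$ for the polynomial runtime. The only cosmetic difference is that the paper establishes $\ch = 0$ exactly (via uniform measures on balls of radius $r \to \infty$), whereas you settle for $\ch \leq 1/2$; as you correctly note, either suffices because $\min\left\{\frac{1-\ch}{2},\frac{1}{4}\right\} = \frac{1}{4}$ in both cases.
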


The error bound follows immediately from Theorem~\ref{thm3} since $\ch=0$ and $(\R^d,\H)$ is $1$-neighborly, as shown in Section~\ref{linear}.  The polynomial-time computational complexity follows from the work of Buck \cite{buck:43}, as discussed in Section~\ref{linear}. Suppose that $\H$ is an $\epsilon$-dense set with respect to a uniform probability measure on a ball in $\R^d$ (i.e., for {\em any} hyperplane of the form (\ref{halfspace}) $\H$ contains a hypothesis whose probability of error is within $\epsilon$ of it). The size of such an $\H$ satisfies $\log\N \leq C\, d \log \epsilon^{-1}$, for a constant $C>0$, which is the proportional to the minimum query complexity possible in this setting, as shown by Balcan et al \cite{nina:07}.  Those authors also present an algorithm with roughly the same query complexity for this problem. However, their algorithm is specifically designed for the linear threshold problem. Remarkably, near-optimal query complexity is achieved in polynomial-time by the general-purpose modified SGBS algorithm.}
\end{example}

\section{Agnostic GBS}

So far we have assumed that the correct hypothesis $h^*$ is in $\H$.  In this section we drop this assumption and consider {\em agnostic} algorithms guaranteed to find the best hypothesis in $\H$ even if the correct hypothesis $h^*$ is not in $\H$ and/or the assumptions of Theorem~\ref{thm1} or \ref{thm3} do not hold. The best hypothesis in $\H$ can be defined as the one that minimizes the error with respect to a probability measure on $\X$, denoted by $P_\X$, which can be arbitrary. This notion of ``best'' commonly arises in machine problems where it is customary to measure the error or {\em risk} with respect to a distribution on $\X$.  A common approach to hypothesis selection is {\em empirical risk minimization} (ERM), which uses queries randomly drawn according to $P_\X$ and then selects the hypothesis in $\H$ that minimizes the number of errors made on these queries. Given a budget of $n$ queries, consider the following agnostic procedure. Divide the query budget into three equal portions.  Use GBS (or NGBS or modified SGBS) with one portion, ERM (queries randomly distributed according $P_\X$) with another, and then allocate the third portion to queries from the subset of $\X$ where the hypothesis selected by GBS (or NGBS or modified SGBS) and the hypothesis selected by ERM disagree, with these queries randomly distributed according to the restriction of $P_\X$ to this subset.  Finally, select the hypothesis that makes the fewest mistakes on the third portion as the final choice.  The sample complexity of this agnostic procedure is within a constant factor of that of the better of the two competing algorithms.  For example, if the conditions of Theorems~\ref{thm1}~or~\ref{thm3} hold, then the sample complexity of the agnostic algorithm is proportional to $\log N$.  In general, the sample complexity of the agnostic procedure is within a constant factor of that of ERM alone.  We formalize this as follows.

\begin{lemma}
\label{runoff}{\em
Let $P_\X$ denote a probability measure on $\X$ and for every $h\in \H$ let $R(h)$ denote its probability of error with respect to $P_\X$.  Consider two hypotheses $h_1,h_2 \in \H$ and let $\Delta \subset \X$ denote the subset of queries for which $h_1$ and $h_2$ disagree; i.e., $h_1(x)\neq h_2(x)$ for all $x\in \Delta$.  Suppose that $m$ queries are drawn independently from $P_{\Delta}$, the restriction of $P_\X$ to the set $\Delta$, let $\widehat{R}_\Delta(h_1)$ and $\widehat{R}_\Delta(h_2)$ denote average number of errors made by $h_1$ and $h_2$ on these queries, let $R_\Delta(h_1) = \E[\widehat{R}_\Delta(h_1)]$ and $R_\Delta(h_2) = \E[\widehat{R}_\Delta(h_2)]$, and select $\widehat{h} = \arg \min\{\widehat{R}_\Delta(h_1),\widehat{R}_\Delta(h_2)\}$.  Then $R(\widehat{h}) >  \min\{R(h_1),R(h_2)\}$ with probability less than $e^{-m|R_\Delta(h_1)-R_\Delta(h_2)|^2/2}$.}
\end{lemma}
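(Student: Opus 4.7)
The plan is to reduce the comparison between $h_1$ and $h_2$ to a one-dimensional concentration statement on $\Delta$, and then apply a one-sided Hoeffding bound.

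The starting observation is that $h_1$ and $h_2$ agree on $\X \setminus \Delta$, so for any true labeling the contribution of $\X \setminus \Delta$ to $R(h_1)$ and to $R(h_2)$ is identical. Writing $q := P_\X(\Delta)$, this gives the identity $R(h_i) = q \, R_\Delta(h_i) + (\text{common term})$ for $i=1,2$, hence
\[
R(h_1) - R(h_2) \;=\; q\,\bigl(R_\Delta(h_1) - R_\Delta(h_2)\bigr).
\]
So the ordering of the overall risks $R(h_1),R(h_2)$ is the same as the ordering of the restricted risks $R_\Delta(h_1),R_\Delta(h_2)$. Consequently, the event $\{R(\widehat h) > \min\{R(h_1),R(h_2)\}\}$ is precisely the event that the empirical minimizer $\widehat h$ on the $m$ samples drawn from $P_\Delta$ differs from the population minimizer on $\Delta$.

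Without loss of generality assume $R_\Delta(h_1) < R_\Delta(h_2)$ (the case of equality makes both choices optimal and the bound holds trivially). The next step is to rewrite the empirical difference as a sum of i.i.d.\ bounded variables. For each of the $m$ queries $(x_j,y_j)$ drawn from $P_\Delta$, set
\[
Z_j \;:=\; \mathbf 1\{h_2(x_j)\neq y_j\} \;-\; \mathbf 1\{h_1(x_j)\neq y_j\}.
\]
Because $x_j \in \Delta$ forces $h_1(x_j)\neq h_2(x_j)$, exactly one of the two indicators is $1$, so $Z_j \in \{-1,+1\}$. By construction, $\tfrac{1}{m}\sum_j Z_j = \widehat R_\Delta(h_2) - \widehat R_\Delta(h_1)$ and $\E[Z_j] = R_\Delta(h_2) - R_\Delta(h_1) =: \mu > 0$.

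The bad event $\{\widehat h = h_2\}$ is then $\{\tfrac{1}{m}\sum_j Z_j \le 0\}$, i.e.\ the empirical mean of the $Z_j$ deviates below its expectation by at least $\mu$. Since the $Z_j$ are i.i.d.\ and take values in an interval of length $2$, one-sided Hoeffding gives
\[
\P\!\left(\tfrac{1}{m}\sum_{j=1}^m Z_j - \mu \le -\mu\right) \;\le\; \exp\!\Bigl(-\tfrac{2m\mu^2}{2^2}\Bigr) \;=\; \exp\!\bigl(-m\mu^2/2\bigr),
\]
which is exactly the claimed bound $e^{-m|R_\Delta(h_1)-R_\Delta(h_2)|^2/2}$. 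There is no real technical obstacle here; the only step that requires care is the initial reduction (using that $h_1=h_2$ off $\Delta$) to justify converting the global risk comparison into the restricted risk comparison on $\Delta$, after which the problem collapses to a textbook Hoeffding estimate on $\pm 1$ random variables.
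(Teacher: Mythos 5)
Your proof is correct and follows essentially the same route as the paper: reduce the global risk comparison to the restricted risks on $\Delta$ using the fact that $h_1=h_2$ off $\Delta$, then apply Hoeffding to the difference of empirical errors on the $m$ samples from $P_\Delta$. If anything, your version is slightly cleaner, since writing the difference as an average of $\pm 1$ variables and using the one-sided Hoeffding bound yields exactly the stated constant $e^{-m|R_\Delta(h_1)-R_\Delta(h_2)|^2/2}$, whereas the paper's sketch carries a stray factor of $2$.
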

\begin{proof}
Define $\delta := R_\Delta(h_1)-R_\Delta(h_2)$ and let $\widehat{\delta} = \widehat R_\Delta(h_1)-\widehat R_\Delta(h_2)$.  By Hoeffding's inequality we have $\widehat{\delta} \in [\delta-\epsilon,\delta+\epsilon]$ with probability at least $1-2e^{-m|\epsilon|^2/2}$. 
It follows that $\P(\mbox{sign}(\widehat \delta) \neq \mbox{sign}(\delta)) \leq 2e^{-m|\delta|^2/2}$. For example, if $\delta >0$ then since
$\P(\widehat \delta < \delta-\epsilon) \leq 2e^{-m\epsilon^2/2}$ we may take $\delta = \epsilon$ to obtain $\P(\widehat \delta <0) \leq 2e^{-m\delta^2/2}$.  The result follows
since $h_1$ and $h_2$ agree on the complement of $\Delta$.
\end{proof}
Note that there is a distinct advantage to drawing queries from $P_\Delta$ rather
than $P_\X$, since the error exponent is proportional to $|R_\Delta(h_1)-R_\Delta(h_2)|^2$ which is greater than or equal to $|R(h_1)-R(h_2)|^2$.
Now to illustrate the idea, consider an agnostic procedure based on modified SGBS and
ERM.  The following theorem is proved in the Appendix.
\begin{theorem}
\label{thm5}
{\em
  Let $P_\X$ denote a measure on $\X$ and suppose we have a query budget of $n$.  Let $h_1$ denote the hypothesis selected by modified SGBS using $n/3$ of the queries and let $h_2$ denote the hypothesis selected by ERM from $n/3$ queries drawn independently from $P_\X$.  Draw the remaining $n/3$ queries independently from $P_\Delta$, the restriction of $P_\X$ to the set $\Delta$ on which $h_1$ and $h_2$ disagree, and let $\widehat{R}_\Delta(h_1)$ and $\widehat{R}_\Delta(h_2)$ denote the average number of errors made by $h_1$ and $h_2$ on these queries.  Select $\widehat{h} = \arg \min\{\widehat{R}_\Delta(h_1),\widehat{R}_\Delta(h_2)\}$.  Then, in general,
\begin{eqnarray*}
\E [R(\widehat h)]  & \leq  & \min\{\E[R(h_1)],\E[R(h_2)]\} \, + \, \sqrt{3/n} \ ,
\end{eqnarray*} 
where $R(h)$, $h\in \H$, denotes the probability of error of $h$ with respect to $P_\X$.
Furthermore, if the assumptions of
Theorem~\ref{thm3} hold and $h^* \in \H$, then
$$\P(\widehat h \neq h^*)  \ \leq \ Ne^{-\lambda n/3} \ + \ 2 e^{-n|1-2\alpha|^2/6} \ ,$$
where $\alpha$ is the noise bound.}
\end{theorem}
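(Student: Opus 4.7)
The plan is to dispatch the two claims separately. Both hinge on Lemma~\ref{runoff} with $m = n/3$, and the realizable case additionally invokes Theorem~\ref{thm3}. Write $\delta := |R_\Delta(h_1)-R_\Delta(h_2)|$ throughout.

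For the general bound I first condition on the pair $(h_1,h_2)$. Because $\widehat h \in \{h_1,h_2\}$, the nonnegative quantity $R(\widehat h) - \min\{R(h_1),R(h_2)\}$ is either $0$ or $|R(h_1)-R(h_2)|$, the latter occurring precisely when the run-off selects the inferior of the two hypotheses. Since $h_1$ and $h_2$ agree on the complement of $\Delta$, I get the identity $|R(h_1)-R(h_2)| = P_\X(\Delta)\,\delta$. Lemma~\ref{runoff} bounds the run-off's mis-selection probability by $e^{-m\delta^2/2}$, so after integrating over the run-off queries the conditional excess risk is at most $P_\X(\Delta)\,\delta\,e^{-m\delta^2/2}$. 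The crux is then the elementary one-dimensional optimization $\max_{\delta\geq 0}\,\delta\,e^{-m\delta^2/2} = e^{-1/2}/\sqrt{m}$, which combined with $P_\X(\Delta)\leq 1$ and $m = n/3$ yields a conditional bound of $\sqrt{3/n}$. Taking the outer expectation and using $\E[\min\{R(h_1),R(h_2)\}] \leq \min\{\E[R(h_1)],\E[R(h_2)]\}$ completes the first assertion.

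For the realizable bound I apply Theorem~\ref{thm3} to the modified SGBS phase (which consumes $n/3$ queries) to obtain $\P(h_1 \neq h^*) \leq \N\,e^{-\lambda n/3}$. On the event $\{h_1 = h^*\}$ I distinguish two sub-cases: if $h_2 = h_1$ then $\widehat h = h^*$ no matter what the run-off does, while if $h_2 \neq h_1$ then $h_1$ agrees with $h^*$ and $h_2$ disagrees at every $x \in \Delta$. Under the $\alpha$-noise model this forces $R_\Delta(h_1) \leq \alpha$ and $R_\Delta(h_2) \geq 1-\alpha$, so $\delta \geq 1-2\alpha$. A second application of Lemma~\ref{runoff} (using the two-sided Hoeffding bound that appears in its proof, which supplies the factor of $2$) then bounds the probability the run-off picks $h_2$ by $2\,e^{-n(1-2\alpha)^2/6}$, and a union bound over the two failure events yields the advertised inequality.

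The main obstacle I anticipate is the trade-off in part one: bounding the run-off failure probability by $1$ gives only $\E[R(\widehat h)] \leq \E[\max\{R(h_1),R(h_2)\}]$, which is useless, so to extract the $n^{-1/2}$ rate one must balance the "small gap, cheap mistake" regime against the "large gap, exponentially rare mistake" regime through the maximization above. Once that optimization is secured, the realizable bound is a routine combination of Theorem~\ref{thm3} with Lemma~\ref{runoff}.
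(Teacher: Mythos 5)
Your proof is correct and follows essentially the same route as the paper's: condition on $(h_1,h_2)$, apply Lemma~\ref{runoff} so that the conditional excess risk takes the form $u\,e^{-cu^2}$, extract the $\sqrt{3/n}$ term from the one-dimensional maximization, and finish with Jensen's inequality; for the realizable case, combine Theorem~\ref{thm3} with the observation that $|R_\Delta(h_1)-R_\Delta(h_2)|\geq 1-2\alpha$ when $h_1=h^*\neq h_2$, and take a union bound over the two failure events. The only (immaterial) difference is that you keep $\delta=|R_\Delta(h_1)-R_\Delta(h_2)|$ as the optimization variable after bounding $P_\X(\Delta)\le 1$, whereas the paper first passes to $|R(h_1)-R(h_2)|\le\delta$ and optimizes over that quantity instead.
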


Note that if the assumptions of Theorem~\ref{thm3} hold, then the agnostic procedure performs almost as well as modified SGBS alone. In particular, the number of queries required to ensure that $\P(\widehat h \neq h^*) \leq \delta$ is proportional to  $\log\frac{\N}{\delta}$; optimal up to constant factors.  Also observe that the probability bound implies the following bound in expectation:
\begin{eqnarray*}
\E [R(\widehat h)]  & \leq  & R(h^*) + \  Ne^{-\lambda n/3} \ + \ 2 e^{-n|1-2\alpha|^2/6} \\
& \leq & R(h^*) \ + \ \N e^{-C n} \ ,
\end{eqnarray*} 
where $C>0$ is a constant depending on $\lambda$ and $\alpha$.  The exponential convergence of the expected risk is much faster than the usual parametric rate for ERM.

If the conditions of Theorem~\ref{thm3} are not met, then modified SGBS (alone) may perform poorly since it might select inappropriate queries and could even terminate with an incorrect hypothesis.  However, the expected error of the agnostic selection $\widehat h$ is within $\sqrt{3/n}$ of the expected error of ERM, with no assumptions on the underlying distributions. Note that the expected error of ERM is proportional to $n^{-1/2}$ in the worst-case situation.  Therefore, the agnostic procedure is near-optimal in general. The agnostic procedure offers this safeguard on performance.  The same approach could be used to derive agnostic procedures from any {\em active learning} scheme (i.e., learning from adaptively selected queries), including GBS or NGBS.   We also note that the important element in the agnostic procedure is the selection of queries; the proposed selection of $\widehat h$ based on those queries is convenient for proving the bounds, but not necessarily optimal.

\section{Applications of GBS}
\label{coherence}

In this section we examine a variety of common situations in which the neighborliness condition can be verified.  We will confine the discussion to GBS in the noise-free situation, and analogous results hold in the presence of noise.
For a given pair $(\X,\H)$, the effectiveness of GBS hinges on determining (or bounding) $\ch$ and establishing that $(\X,\H)$ are neighborly.  Recall the definition of the bound $\ch$ from (\ref{cstar}) and that $\N = |\H|$, the cardinality of $\H$.  A {\em trivial} bound for $\ch$ is
$$\max_{h\in \H} \left|\sum_{A\in \A} h(A) \, P(A)\right| \ \leq \ 
1-\N^{-1} \ ,$$ which is achieved by allocating $\N^{-1}$ mass to each hypothesis and evenly distributing $\frac{1}{2\N}$ mass to set(s) $A$ where $h(A)=+1$ and $\frac{1}{2\N}$ on set(s) $A$ where $h(A)=-1$.  Non-trivial coherence bounds are those for which there exists a $P$ and  $0 \leq c < 1$ that does not depend on $\N$ such that
$$\max_{h\in \H} \left|\sum_{A \in \A} h(A) \, P(A)\right| \ \leq \ c \ .$$
The coherence parameter $\ch$ is analytically determined or bounded in several illustrative applications below.  We also note that it may be known a priori that $\ch$ is bounded far way from $1$.  Suppose that for a certain $P$ on $\X$ (or $\A$) the absolute value of the first-moment of the correct hypothesis (w.r.t.\ $P$) is known to be upper bounded by a constant $c<1$.  Then all hypotheses that violate the bound can be eliminated from consideration.  Thus the constant $c$ is an upper bound on $\ch$.  Situations like this can arise, for example, in binary classification problems with side/prior knowledge that the marginal probabilities of the two classes are somewhat balanced.  Then the moment of the correct hypothesis, with respect to the marginal probability distribution on $\X$, is bounded far away from $1$ and $-1$.

The neighborly condition can be numerically verified in a straightforward fashion.  Enumerate the equivalence sets in $\A$ as $A_1,\dots,A_M$. To check whether the $k$-neighborhood graph is connected, form an $M \times M$ matrix $R_k$ whose $i,j$ entry is $1$ if $A_i$ and $A_j$ are k-neighbors and $0$ otherwise.  Normalize the rows of $R_k$ so that each sums to $1$ and denote the resulting {\em stochastic} matrix by $Q_k$.  The $k$-neighborhood graph is connected if and only if there exists an integer $\ell$, $0 < \ell \leq M$, such that $Q_k^\ell$ contains no zero entries.  This follows from the standard condition for state accessibility in Markov chains (for background on Markov chains one can refer to the textbook by Br\'{e}maud \cite{bremaud}).  The smallest $k$ for which the $k$-neighborhood graph is connected can be determined using a binary search over the set $1,\dots,M$, checking the condition above for each value of $k$ in the search.  This idea was suggested to me by Clayton Scott. Thus, the neighborly condition can be verified in polynomial-time in $M$.  Alternatively, in many cases the neighborly condition can be verified analytically, as demonstrated in following applications.

\subsection{One Dimensional Problems}
\label{cbs}
First we show that GBS reduces to classic binary search.  Let $\H = \{h_1,\dots,h_\N\}$ be the collection of binary-valued functions on $\X = [0,1]$ of the following form, $h_i(x):=\mbox{sign}\left(x-\frac{i}{N+1}\right)$ for $i=1,\dots,\N$ (and  $\mbox{sign}(0):=+1$).  Assume that $h^* \in \H$.

First consider the neighborly condition. Recall that $\A$ is the smallest partition of $\X$ into equivalence sets induced by $\H$.  In this case, each $A$ is an interval of the form $A_i = [\frac{i-1}{\N+1},\frac{i}{\N+1})$, $i=1,\dots,\N$.  Observe that only a single hypothesis, $h_i$, has different responses to queries from $A_i$ and $A_{i+1}$ and so they are $1$-neighbors, for $i=1,\dots,\N-1$.  Moreover, the $1$-neighborhood graph is connected in this case, and so $(\X,\H)$ is $1$-neighborly.

Next consider coherence parameter $\ch$.  Take $P$ to be two point masses at $x=0$ and $x=1$ of probability $1/2$ each.  Then $\left|\sum_{A\in \A} h(A) \, P(A)\right| \ = \ 0$ for every $h \in \H$, since $h(0) = -1$ and $h(1) = +1$.  Thus, $\ch = 0$.  Since $\ch= 0$ and $k=1$, we have $\alpha = 2/3$ and the query complexity of GBS is proportional to $\log \N$ according to Theorem~\ref{thm1}. The reduction factor of $2/3$, instead of $1/2$, arises because we allow the situation in which the number of hypotheses may be odd (e.g., given three hypotheses), the best query may eliminate just one).  If $\N$ is even, then the query complexity is $\log_2(\N)$, which is information-theoretically optimal.

Now let $\X = [0,1]$ and consider a finite collection of hypotheses $\H = \{h_i(x)\}_{i=1}^\N$, where $h_i$ takes the value $+1$ when $x \in [a_i,b_i)$, for a pair $0 \leq a_i < b_i \leq 1$, and $-1$ otherwise. Assume that $h^* \in \H$. The partition $\A$ again consists of intervals, and the neighborly condition is satisfied with $k=1$.  To bound $\ch$, note that the minimizing $P$ must place some mass within and outside each interval $[a_i,b_i)$.  If the intervals all have length at least $\ell>0$, then taking $P$ to be the uniform measure on $[0,1]$ yields that $\ch \leq 1-2\ell$, regardless of the number of interval hypotheses under consideration.  Therefore, in this setting Theorem~\ref{thm1} guarantees that GBS determines the correct hypothesis using at most a constant times $\log \N$ steps.

However, consider the special case in which the intervals are disjoint.  Then it is not hard to see that the best allocation of mass is to place $1/\N$ mass in each subinterval, resulting in $\ch =1-2\N^{-1}$.  And so, Theorem~\ref{thm1} only guarantees that GBS is will terminate in at most $\N$ steps (the number of steps required by exhaustive linear search). In fact, it is easy to see that no procedure can do better than linear search in this case and the query complexity of any method is proportional to $N$. However, note that if queries of a different form were allowed, then much better performance is possible.  For example, if queries in the form of dyadic subinterval tests were allowed (e.g., tests that indicate whether or not the correct hypothesis is $+1$-valued anywhere within a dyadic subinterval of choice), then the correct hypothesis can be identified through $\lceil \log_2 \N\rceil$ queries (essentially a binary encoding of the correct hypothesis).  This underscores the importance of the geometrical relationship between $\X$ and $\H$ embodied in the neighborly condition and the incoherence parameter $\ch$. Optimizing the query space to the structure of $\H$ is related to the notion of arbitrary queries examined in the work of Kulkarni et al \cite{kulkarni}, and   somewhat to the theory of compressed sensing developed by Candes et al \cite{candes:tit06a}  and Donoho \cite{donoho:tit06}.

\subsection{Multidimensional Problems}
\label{linear}

Let $\H=\{h_i\}_{i=1}^\N$ be a collection of multidimensional threshold functions of the following form. The threshold of each $h_i$ determined by (possibly nonlinear) decision surface in $d$-dimensional Euclidean space
and the queries are points in $\X:=\R^d$. It suffices to consider linear decision surfaces of the form 
\begin{eqnarray}
h_i(x) := \mbox{sign}(\langle a_i, x\rangle+b_i),
\label{surface}
\end{eqnarray}
where $a_i \in \R^d$, $\|a_i\|_2 = 1$, the offset $b_i\in \R$ satisfies $|b_i|\leq b$ for some constant $b<\infty$, and $\langle a_i, x\rangle$ denotes the inner product in $\R^d$.  Each hypothesis is associated with a halfspace of $\R^d$. Note that hypotheses of this form can be used to represent nonlinear decision surfaces, by first applying a mapping to an input space and then forming linear decision surfaces in the induced query space. The problem of learning multidimensional threshold functions arises commonly in computer vision (see the review of Swain and Stricker\cite{active_vision} and applications by Geman and Jedynak \cite{geman} and Arkin et al \cite{arkin:98}), image processing  studied by Korostelev and Kim \cite{kor1,kor2}, and active learning research; for example the investigations by Freund et al \cite{freund:97}, Dasgupta \cite{dasgupta:04}, Balcan et al \cite{nina:07}, and Castro and Nowak \cite{rui:08}.

First we show that the pair $(\R^d,\H)$ is $1$-neighborly. Each $A\in \A$ is a polytope in $\R^d$.  These polytopes are generated by intersections of the halfspaces corresponding to the hypotheses. Any two polytopes that share a common face are $1$-neighbors (the hypothesis whose decision boundary defines the face, and its complement if it exists, are the only ones that predict different values on these two sets).  Since the polytopes tessellate $\R^d$, the $1$-neighborhood graph of $\A$ is connected.

We next show that the the coherence parameter $\ch=0$.  Since the offsets of the hypotheses are all less than $b$ in magnitude, it follows that the distance from the origin to the nearest point of the decision surface of every hypothesis is at most $b$.  Let $P_r$ denote the uniform probability distribution on a ball of radius $r$ centered at the origin in $\R^d$.  Then for every $h$ of the form (\ref{surface})  there exists a constant $C>0$ (depending on $b$) such that
\begin{eqnarray*}
 \left|\sum_{A\in \A} h(A) \, P_r(A)\right| \ = \ \left|\int_{\R^d} h(x) \, dP_r(x)\right| &  \leq & \frac{C}{r^d}  \ ,
\end{eqnarray*}
and $\lim_{r\rightarrow
  \infty} \left|\int_{\R^d} h(x) \, dP_r(x)\right| = 0$.  Therefore $\ch = 0$, and it follows from Theorem~\ref{thm1} guarantees that GBS determines the correct multidimensional threshold in at most $\lceil \frac{\log 2}{\log(3/2)} \, \log_2 \N\rceil$ steps.  To the best of our knowledge this is a new result in the theory of learning multidimensional threshold functions, although similar query complexity bounds have been established for the subclass of linear threshold functions with $b=0$ (threshold boundaries passing through the origin); see for example the work of Balcan et al \cite{nina:07}.  These results are
based on somewhat different learning algorithms, assumptions and analysis techniques.

Observe that if $\H$ is an $\epsilon$-dense (with respect Lesbegue measure over a compact set in $\R^d$) subset of the continuous class of threshold functions of the form (\ref{surface}), then the size of the $\H$ satisfies $\log |\H| = \log\N \propto d \log \epsilon^{-1}$. Therefore the query complexity of GBS is proportional to the metric entropy of the continuous class, and it follows from the results of Kulkarni et al \cite{kulkarni} that no learning algorithm exists with a lower query complexity (up to constant factors).  Furthermore, note that the computational complexity of GBS for hypotheses of the form (\ref{surface}) is proportional to the cardinality of $\A$, which is equal to the number of polytopes generated by intersections of half-spaces. It is a well known fact (see Buck \cite{buck:43}) that $|\A| = \sum_{j=0}^d {\N\choose{j}} \propto \N^d$.  Therefore, GBS is a polynomial-time algorithm for this problem.  In general, the cardinality of $\A$ could be as large as $2^\N$.

Next let $\H$ again be the hypotheses of the form (\ref{surface}), but let $\X := [-1,1]^d$, instead of all of $\R^d$.  This constraint on the query space affects the bound on the coherence $\ch$.  To bound $\ch$, let $P$ be point masses of probability $2^{-d}$ at each of the $2^d$ vertices of the cube $[-1,1]^d$ (the natural generalization of the $P$ chosen in the case of classic binary search in Section~\ref{cbs} above).  Then $\left|\sum_{A\in \A} h(A) \, P(A)\right| = \left|\int_\X h(x) \, dP(x)\right| \ \leq \ 1-2^{-d+1}$ for every $h \in \H$, since for each $h$ there is at least one vertex on where it predicts $+1$ and one where it predicts $-1$. Thus, $\ch \leq 1-2^{-d+1}$.  We conclude that the GBS determines the correct hypothesis in proportional to $2^{d} \log \N$ steps.  The dependence on $2^d$ is unavoidable, since it may be that each threshold function takes that value $+1$ only at one of the $2^d$ vertices and so each vertex must be queried. A noteworthy special case is arises when $b=0$ (i.e., the threshold boundaries pass through the origin).  In this case, with $P$ as specified above, $\ch = 0$, since each hypothesis responds with $+1$ at half of the vertices and $-1$ on the other half.  Therefore, the query complexity of GBS is at most $\lceil \frac{\log 2}{\log(3/2)} \, \log_2 \N\rceil$, independent of the dimension.  As discussed above, similar results for this special case have been previously reported based on different algorithms and analyses; see the results in the work of Balcan et al \cite{nina:07} and the references therein. Note that even if the threshold boundaries do not pass through the origin, and therefore the number of queries needed is proportional to $\log \N$ so long as $|b|<1$.  The dependence on dimension $d$ can also be eliminated if it is known that for a certain distribution $P$ on $\X$ the absolute value of the moment of the correct hypothesis w.r.t.\ $P$ is known to be upper bounded by a constant $c<1$, as discussed at the beginning of this section.


Finally, we also mention hypotheses associated with axis-aligned rectangles in $[0,1]^d$, the multidimensional version of the interval hypotheses considered above.  An axis-aligned rectangle is defined by its boundary coordinates in each dimension, $\{a_j,b_j\}_{j=1}^d$, $0 \leq a_j < b_j\leq 1$.  The hypothesis associated with such a rectangle takes the value $+ 1$ on the set $\{x \in [0,1]^d: \ a_j \leq x_j \leq b_j, \ j=1,\dots,d\}$ and $-1$ otherwise.  The complementary hypothesis may also be included.  Consider a finite collection $\H$ of hypotheses of this form.  If the rectangles associated with each $h\in \H$ have volume at least $\nu$, then by taking $P$ to be the uniform measure on $[0,1]^d$ it follows that the coherence parameter $\ch \leq 1-2\nu$ for this problem.  The cells of partition $\A$ of $[0,1]$ associated with a collection of such hypotheses are rectangles themselves.  If the boundaries of the rectangles associated with the hypotheses are distinct, then the $1$-neighborhood graph of $\A$ is connected.  Theorem~\ref{thm1} implies that the number of queries needed by GBS to determine the
correct rectangle is proportional to $\log \N/\log((1-\nu)^{-1})$.

\subsection{Discrete Query Spaces}

In many situations both the hypothesis and query spaces may be discrete.  A machine learning application, for example, may have access to a large (but finite) pool of unlabeled examples, any of which may be queried for a label.  Because obtaining labels can be costly, ``active'' learning algorithms select only those examples that are predicted to be highly informative for labeling.  Theorem~\ref{thm1} applies equally well to continuous or discrete query spaces.  For example, consider the linear separator case, but instead of the query space $\R^d$ suppose that $\X$ is a finite subset of points in $\R^d$.  The hypotheses again induce a partition of $\X$ into subsets $\A(\X,\H)$, but the number of subsets in the partition may be less than the number in $\A(\R^d,\H)$.  Consequently, the neighborhood graph of $\A(\X,\H)$ depends on the specific points that are included in $\X$ and may or may not be connected.  As discussed at the beginning of this section, the neighborly condition can be verified in polynomial-time (polynomial in $|\A| \leq |\X|$).  

Consider two illustrative examples.  Let $\H$ be a collection of linear separators as in (\ref{surface}) above and first reconsider the partition $\A(\R^d,\H)$.  Recall that each set in $\A(\R^d,\H)$ is a polytope.  Suppose that a discrete set $\X$ contains at least one point inside each of the polytopes in $\A(\R^d,\H)$.  Then it follows from the results above that $(\X,\H)$ is $1$-neighborly.  Second, consider a simple case in $d=2$ dimensions.  Suppose $\X$ consists of just three non-colinear points $\{x_1,x_2,x_3\}$ and suppose that $\H$ is comprised of six classifiers, $\{h_1^+,h_1^-,h_2^+,h_2^-,h_3^+,h_3^-\}$, satisfying $h_i^+(x_i)=+1$, $h_i^+(x_j)=-1, j\neq i$ , $i=1,2,3$, and $h_i^-=-h_i^+$, $i=1,2,3$.  In this case, $\A(\X,\H) = \left\{\{x_1\},\{x_2\},\{x_3\}\right\}$ and the responses to any pair of queries differ for four of the six hypotheses. Thus, the $4$-neighborhood graph of $\A(\X,\H)$ is connected, but the $1$-neighborhood is not.

Also note that a finite query space naturally limits the number of hypotheses that need be considered.  Consider an uncountable collection of hypotheses. The number of unique labeling assignments generated by these hypotheses can be bounded in terms of the VC dimension of the class; see the book by Vapnik for more information on VC theory \cite{vapnik95}.  As a result, it suffices to consider a finite subset of the hypotheses consisting of just one representative of each unique labeling assignment.  Furthermore, the computational complexity of GBS is proportional to $\N \, |\X|$ in this case.

\section{Related Work}

Generalized binary search can be viewed as a generalization of classic binary search, Shannon-Fano coding as noted by Goodman and Smyth \cite{goodman:88}, and channel coding with noiseless feedback as studied by Horstein \cite{horstein:63}.  Problems of this nature arise in many applications, including channel coding (e.g., the work of Horstein \cite{horstein:63} and Zigangirov \cite{z2}), experimental design (e.g., as studied by R{\'e}nyi   \cite{renyi:61,renyi:65}), disease diagnosis (e.g., see the  work of Loveland \cite{tree2}), fault-tolerant computing (e.g., the work of Feige et al \cite{feige:94}), the scheduling problem considered by  Kosaraju et al \cite{tree1}, computer vision problems investigated by Geman and Jedynak \cite{geman} and Arkin et al \cite{arkin:98}), image processing problems studied by Korostelev and Kim \cite{kor1,kor2}, and active learning research; for example the investigations by Freund et al \cite{freund:97}, Dasgupta \cite{dasgupta:04}, Balcan et al \cite{nina:07}, and Castro and Nowak \cite{rui:08}. 

Past work has provided a partial characterization of this problem. If the responses to queries are noiseless, then selecting the sequence of queries from $\X$ is equivalent to determining a binary decision tree, where a sequence of queries defines a path from the root of the tree (corresponding to $\H$) to a leaf (corresponding to a single element of $\H$).  In general the determination of the optimal (worst- or average-case) tree is NP-complete as shown by Hyafil and Rivest \cite{tree4}.  However, there exists a greedy procedure that yields query sequences that are within a factor of $\log \N$  of the optimal search tree depth; this result has been discovered independently by several researchers including Loveland \cite{tree2}, Garey and Graham \cite{tree2}, Arkin et al \cite{tree3}, and Dasgupta \cite{dasgupta:04}. The greedy procedure is referred to here as {\em
  Generalized Binary Search} (GBS) or the {\em splitting algorithm}, and it reduces to classic binary search, as discussed in Section~\ref{cbs}.

The number of queries an algorithm requires to determine $h^*$ is called the {\em query complexity} of the algorithm. Since the hypotheses are assumed to be distinct, it is clear that the query complexity of GBS is at most $\N$ (because it is always possible to find query that eliminates at least one hypothesis at each step).  In fact, there are simple examples (see Section~\ref{cbs}) demonstrating that this is the best one can hope to do in general.  However, it is also true that in many cases the performance of GBS can be much better, requiring as few as $\log_2(\N)$ queries.  In classic binary search, for example, half of the hypotheses are eliminated at each step (e.g., refer to the textbook by Cormen et al \cite{cormen:09}).
R{\'e}nyi first considered a form of binary search with noise \cite{renyi:61} and explored its connections with information theory \cite{renyi:65}.  In particular, the problem of sequential transmission over a binary symmetric channel with noiseless feedback, as formulated by Horstein \cite{horstein:63} and studied by Burnashev and Zigangirov \cite{bz} and more recently by Pelc et al \cite{pelc:02}, is equivalent to a noisy binary search problem. 

There is a large literature on learning from queries; see the review articles by Angluin \cite{angluin:88,angluin:01}.  This paper focuses exclusively on membership queries (i.e., an $x \in \X$ is the query and the response is $h^*(x)$), although other types of queries (equivalence, subset, superset, disjointness, and exhaustiveness) are possible as discussed by Angluin \cite{angluin:88}.  
{\em Arbitrary queries} have also been investigated, in which the query is a subset of $\H$ and the output is $+1$ if $h^*$ belongs to the subset and $-1$ otherwise.  A finite collection of hypotheses $\H$ can be successively halved using arbitrary queries, and so it is possible to determine $h^*$ with $\log_2\N$ arbitrary queries, the information-theoretically optimal query complexity discussed by Kulkarni et al \cite{kulkarni}.  Membership queries are the most natural in function learning problems, and because this paper deals only with this type we will simply refer to them as queries throughout the rest of the paper.  The number of queries required to determine a binary-valued function in a finite collection of hypotheses can be bounded (above and below) in terms of a combinatorial parameter of $(\X,\H)$ due to Heged{\"u}s\cite{hegedus:95} (see the work of Hellerstein et al \cite{hellerstein:96} for related work).  Due to its combinatorial nature,  computing such bounds are generally NP-hard.  In contrast, the geometric relationship between $\X$ and $\H$ developed in this paper leads to an upper bound on the query complexity that can be determined analytically or computed in polynomial time in many cases of interest.

The term GBS is used in this paper to emphasize connections and similarities with classic binary search, which is a special case the general problem considered here.  Classic binary search is equivalent to learning a one-dimensional binary-valued threshold function by selecting point evaluations of the function according to a bisection procedure.  Consider the threshold function $h_t(x):=\mbox{sign}(x-t)$ on the interval $\X:=[0,1]$ for some threshold value $t \in (0,1)$.  Throughout the paper
we adopt the convention that $\mbox{sign}(0)=+1$. Suppose that $t$ belongs to the discrete set $\{\frac{1}{N+1},\dots,\frac{N}{N+1}\}$ and let $\H$ denote the collection of threshold functions $h_1,\dots,h_{N}$. The value of $t$ can then determined from a constant times $\log N$ queries using a bisection procedure analogous to the game of twenty questions. In fact, this is precisely what GBS performs in this case (i.e., GBS reduces to classic binary search in this setting). If $N = 2^m$ for some integer $m$, then each point evaluation provides one bit in the $m$-bit binary expansion of $t$.  Thus, classic binary search is information-theoretically optimal; see the book by Traub, Wasilkowski and Wozniakowski \cite{traub88} for a nice treatment of classic bisection and binary search.  The main results of this paper generalize the salient aspects of classic binary search to a much broader class of problems.
In many (if not most) applications it is unrealistic to assume that the responses to queries are without error.  A form of binary search with noise appears to have been first posed by R{\'e}nyi \cite{renyi:61}.  The noisy binary search problem arises in sequential transmission over a binary symmetric channel with noiseless feedback studied by Horstein \cite{horstein:63} and Zigangirov \cite{z1,z2}. The survey paper by Pelc et al \cite{pelc:02} discusses the connections between search and coding problems. In channel coding with feedback, each threshold corresponds to a unique binary codeword (the binary expansion of $t$).  Thus, channel coding with noiseless feedback is equivalent to the problem of learning a one-dimensional threshold function in binary noise, as noted by Burnashev and Zigangirov \cite{bz}.  The  near-optimal solutions to the {\em noisy binary search} problem first appear in these two contexts.  Discrete versions of Horstein's probabilistic bisection procedure \cite{horstein:63} were shown to be information-theoretically optimal (optimal decay of the error probability) in the works of Zigangirov and Burnashev \cite{z1,z2,bz}.  More recently, the same procedure was independently proposed and analyzed in the context of noise-tolerant versions of the classic binary search problem by Karp and Kleinberg \cite{kk:07}, which was motivated by applications ranging from investment planning to admission control in queueing networks. Closely related approaches are considered in the work of Feige et al \cite{feige:94 }. The noisy binary search problem has found important applications in the minimax theory of sequential, adaptive sampling procedures proposed by Korostelov and Kim \cite{kor1,kor2} for image recovery and binary classification problems studied by Castro and Nowak \cite{rui:08}.  We also mention the works of Rivest et al  \cite{rivest}, Spencer \cite{spencer} and Aslam and Dhagat \cite{dhagat1}, and Dhagat et al \cite{dhagat}, which consider adversarial situations in which the total number of erroneous oracle responses is fixed in advance.

One straightforward approach to noisy GBS is to follow the GBS algorithm, but to repeat the query at each step multiple times in order to decide whether the response is more probably $+1$ or $-1$.   This simple approach has been studied in the context of noisy versions of classic binary search and shown to perform significantly worse than other approaches in the work of Karp and Kleinberg \cite{kk:07}; perhaps not surprising since this is essentially a simple repetition code approach to communicating over a noisy channel.  A near-optimal noise-tolerant version of GBS was developed in this paper.  The algorithm can be viewed as a non-trivial generalization of Horstein's probabilistic bisection procedure.  Horstein's method relies on the special structure of classic binary search, namely that the hypotheses and queries can be naturally ordered together in the unit interval.  Horstein's method is a sequential Bayesian procedure.  It begins with uniform distribution over the set of hypotheses.  At each step, it  queries at the point that bisects the probability mass of the current distribution over hypotheses, and then updates the distribution according to Bayes rule. Horstein's procedure isn't directly applicable to situations in which the hypotheses and queries cannot be ordered togetherl, but the geometric condition developed in this paper provides similar structure that is exploited here to devise a generalized probabilistic bisection procedure. The key elements of the procedure and the analysis of its convergence are fundamentally different from those in the classic binary search work of Burnashev and Zigangirov \cite{bz} and Karp and Kleinberg \cite{kk:07}.

\section{Conclusions and Possible Extensions}
\label{sec:conclude}
This paper investigated a generalization of classic binary search, called GBS, that extends it to arbitrary query and hypothesis spaces.  While the  GBS algorithm
is well-known, past work has only partially characterized its capabilities.  
This paper developed new conditions under which GBS (and a noise-tolerant variant) achieve the information-theoretically optimal query complexity.  The new conditions are based on a novel geometric relation between the query and hypothesis spaces, which is verifiable analytically and/or computationally in many cases of practical interest.  The main results are applied to learning multidimensional threshold functions, a problem arising routinely in image processing and machine learning.

Let us briefly consider some possible extensions and open problems.  First recall that in noisy situations it is assumed that the binary noise probability has a known upper bound $\alpha < 1/2$.  It is possible to accommodate situations in which the bound is unknown a priori.  This can be accomplished using an NGBS algorithm in which the number of repetitions of each query, $R$, is determined adaptively to adjust to the unknown noise level.  This procedure was developed by the author in \cite{nowak_gbs}, and is based on a straightforward, iterated application of Chernoff's bound. Similar strategies have been suggested as a general approach for devising noise-tolerant learning algorithms \cite{matti}.  Using an adaptive procedure for adjusting the number of repetitions of each query yields an NGBS algorithm with query complexity bound proportional to $\log \N \log \frac{\log \N}{\delta}$, the same order as that of the NGBS algorithm discussed above which assumed a known bound $\alpha$.  Whether or not the additional logarithmic factor can be removed if the noise bound $\alpha$ is unknown is an open question.
Adversarial noise models in which total number of errors is fixed in advance, like those considered by  Rivest et al  \cite{rivest} and Spencer \cite{spencer}, are also of interest in classic binary search problems.  Repeating each query multiple times and taking the majority vote of the responses, as in the NGBS algorithm, is a standard approach to adversarial noise.   Thus, NGBS provides an algorithm for generalized binary search with adversarial noise.

Finally, we suggest that the salient features of the GBS algorithms could be extended to handle continuous, uncountable classes of hypotheses.  For example, consider the continuous class of halfspace threshold functions on $\R^d$.  This class is indexed parametrically and it is possible to associate a volume measure with the class (and subsets of it) by introducing a measure over the parameter space.  At each step of a GBS-style algorithm, all inconsistent hypotheses are eliminated and the next query is selected to split the volume of the parameter space corresponding to the remaining hypotheses, mimicking the splitting criterion of the GBS algorithm presented here.
\\ \ \\
\noindent {\sc Acknowledgements}. The author thanks R.\ Castro, A.\ Gupta, C.\ Scott, A.\ Singh and the anonymous reviewers for helpful feedback and suggestions.

\section{Appendix}

\subsection{Proof of Lemma~\ref{martingale}}

First we derive the precise form of $p_1,p_2,\dots$ is derived as follows.  Let
$\delta_i = (1+\sum_h p_i(h) \, z_i(h))/2$, the weighted proportion of
hypotheses that agree with $y_i$.  The factor that normalizes the
updated distribution in (\ref{update1}) is related to $\delta_i$ as
follows.  Note that $\sum_h p_i(h) \, \beta^{(1-z_i(h))/2}
(1-\beta)^{(1+z_i(h))/2} = \sum_{h:z_i(h)=-1} p_i(h)\beta +
\sum_{h:z_i(h)=1} p_i(h)(1-\beta) = \mbox{$(1-\delta_i)\beta+\delta_i
  (1-\beta)$}$. Thus,
\begin{eqnarray*}p_{i+1}(h) \ = \ p_i(h) \, \frac{\beta^{(1-z_i(h))/2}
  (1-\beta)^{(1+z_i(h))/2}}{(1-\delta_i)\beta+\delta_i (1-\beta)}
\end{eqnarray*}
Denote the reciprocal of the update factor for $p_{i+1}(h^*)$ by
\begin{eqnarray}
\gamma_i := \frac{(1-\delta_i)\beta+\delta_i
  (1-\beta)}{\beta^{(1-z_i(h^*))/2} (1-\beta)^{(1+z_i(h^*))/2}} \ ,
\label{gamma}
\end{eqnarray}
where $z_i(h^*) = h^*(x_i)y_i$, and observe that $p_{i+1}(h^*) =
p_i(h^*)/\gamma_i$.  Thus,
$$\frac{\mm_{i+1}}{\mm_i} \ = \ \frac{(1- p_i(h^*)/\gamma_i)p_i(h^*)}{
  p_i(h^*)/\gamma_i(1-p_i(h^*))} \ = \ \frac{\gamma_i- p_i(h^*)}{
  1-p_i(h^*)} \ .$$
We prove that
$\mathbb{E}[{\mm_{i+1}}|p_i]\leq\mm_i$ by showing  that
$\mathbb{E}[\gamma_i|p_i] \leq 1$.  To accomplish this, we will let $p_i$ be arbitrary.

For every $A\in {\cal A}$ and every $h \in \H$ let $h(A)$ denote the
value of $h$ on the set $A$.  Define $\delta_{A}^+ = (1+\sum_h
  p_i(h)h(A))/2$, the proportion of hypotheses that take the value
$+1$ on $A$.  Let $A_i$ denote that set that $x_i$ is selected
from, and consider the four possible situations:
\begin{eqnarray*}
h^*(x_i) = +1, \, y_i = +1: & \gamma_i =\frac{(1-\delta_{A_i}^+) \beta + \delta_{A_i}^+ (1-\beta)}{1-\beta} \\
h^*(x_i) = +1, \, y_i = -1: & \gamma_i =\frac{\delta_{A_i}^+ \beta + (1-\delta_{A_i}^+) (1-\beta)}{\beta} \\
h^*(x_i) = -1,\,  y_i = +1: & \gamma_i =\frac{(1-\delta_{A_i}^+) \beta + \delta_{A_i}^+ (1-\beta)}{\beta} \\
h^*(x_i) = -1, \, y_i = -1: & \gamma_i =\frac{\delta_{A_i}^+ \beta + (1-\delta_{A_i}^+) (1-\beta)}{1-\beta}
\end{eqnarray*}
To bound $\E[\gamma_i|p_i]$ it is helpful to condition on $A_i$.
Define $q_i := \P(y_i\neq h^*(x_i))$.
If $h^*(A_i)=+1$, then
\begin{eqnarray*}
  \E[\gamma_i|p_i,A_i] 
  & = &  \frac{(1-\delta_{A_i}^+)\beta+\delta_{A_i}^+ (1-\beta)}{1-\beta}(1-q_i)\ + \ \frac{\delta_{A_i}^+\beta+(1-\delta_{A_i}^+ )(1-\beta)}{\beta}q_i 
 \\
& = & \delta_{A_i}^+ + (1-\delta_{A_i}^+)\left[\frac{\beta(1-q_i)}{1-\beta}+\frac{q_i(1-\beta)}{\beta} \right] \ .
\end{eqnarray*}
Define $\gamma_i^+(A_i):=\delta_{A_i}^+ + (1-\delta_{A_i}^+)\left[\frac{\beta(1-q_i)}{1-\beta}+\frac{q_i(1-\beta)}{\beta} \right]$.
Similarly, if $h^*(A_i)=-1$, then
\begin{eqnarray*}
\E[\gamma_i|p_i,A_i] 
& = & (1- \delta_{A_i}^+) + \delta_{A_i}^+ \left[\frac{\beta(1-q_i)}{1-\beta}+\frac{q_i(1-\beta)}{\beta} \right] \ =: \ \gamma_i^{-}(A_i) 
\end{eqnarray*}
By assumption $q_i \leq \alpha < 1/2$, and since $\alpha \leq \beta <
1/2$ the factor
$\frac{\beta(1-q_i)}{1-\beta}+\frac{q_i(1-\beta)}{\beta} \leq
\frac{\beta(1-\alpha)}{1-\beta}+\frac{\alpha(1-\beta)}{\beta} \leq 1$
(strictly less than $1$ if $\beta >\alpha$).
Define
\begin{eqnarray*}
\varepsilon_0 :=
1-\frac{\beta(1-\alpha)}{1-\beta}-\frac{\alpha(1-\beta)}{\beta} \ ,
\end{eqnarray*}
to obtain the bounds
\begin{eqnarray}
\gamma_i^+(A_i) & \leq &  \delta_{A_i}^+ + (1-\delta_{A_i}^+)(1-\varepsilon_0) \label{pb} \ , \\
\gamma_i^-(A_i) & \leq &  \delta_{A_i}^+ (1-\varepsilon_0) + (1- \delta_{A_i}^+) \label{nb} \ .
\end{eqnarray}
Observe that for every $A$ we have $0 < \delta_{A}^+ < 1$, since at least one hypothesis takes the value $-1$ on $A$ and $p(h)>0$ for all $h\in \H$.  Therefore both $\gamma_i^+(A_i)$ and $\gamma_i^-(A_i)$ are less or equal to $1$, and it follows that $\mathbb{E}[\gamma_i|p_i]\leq1$ (and strictly less than $1$ if $\beta > \alpha$).
 \hfill $\blacksquare$

\subsection{Proof of Theorem~\ref{thm3}}
The proof amounts to obtaining upper bounds for $\gamma_i^+(A_i)$ and
$\gamma_i^-(A_i)$, defined above in (\ref{pb}) and (\ref{nb}). 
Consider two distinct situations.  Define $b_i:=\min_{A \in {\cal
    A}} |W(p_i,A)|$.  First suppose that there do not exist
neighboring sets $A$ and $A'$ with $W(p_i,A) > b_i$ and $W(p_i,A') <
-b_i$.  Then by Lemma~\ref{lemma1}, this implies that $b_i \leq \ch$,
and according the query selection step of the modified SGBS algorithm,
$A_i = \arg \min_{A} \left| W(p_i,A)\right|$.  Note that because
$|W(p_i,A_i)|\leq \ch$, $(1-\ch)/2 \ \leq \ \delta_{A_i}^+ \ \leq
(1+\ch)/2$.  Hence, both $\gamma_i^+(A_i)$ and $\gamma_i^-(A_i)$ are
bounded above by $1-\varepsilon_0(1-\ch)/2$.

Now suppose that there exist neighboring sets $A$ and $A'$ with
$W(p_i,A) > b_i$ and $W(p_i,A') < -b_i$.   
Recall that in this case $A_i$ is randomly chosen to be $A$ or $A'$
with equal probability.  Note that $\delta_{A}^+ > (1+b_i)/2$ and
$\delta_{A'}^+ < (1-b_i)/2$.  
If $h^*(A) = h^*(A') = +1$, then applying (\ref{pb}) results in
\begin{eqnarray*}
\E[\gamma_i|p_i,A_i \in \{A,A'\}]  & < & \frac{1}{2}(1 + \frac{1-b_i}{2}+\frac{1+b_i}{2}(1-\varepsilon_0)) \ = \ \frac{1}{2}(2-\varepsilon_0\frac{1+b_i}{2})  \ \leq \ 1-\varepsilon_0/4 \ ,
\end{eqnarray*}
since $b_i>0$.  Similarly, if $h^*(A) = h^*(A') = -1$, then (\ref{nb})
yields $\E[\gamma_i|p_i,A_i \in \{A,A'\}] < 1-\varepsilon_0/4$.
If $h^*(A) = -1$ on $A$ and $h^*(A') = +1$, then applying (\ref{nb}) on
$A$ and (\ref{pb}) on $A'$ yields
\begin{eqnarray*}
\E[\gamma_i|p_i,A_i \in \{A,A'\}] &  \leq & \frac{1}{2}\left(\delta_{A}^+ (1-\varepsilon_0) + (1- \delta_{A}^+) +\delta_{A'}^+ + (1-\delta_{A'}^+)(1-\varepsilon_0)\right) \\
& = &  \frac{1}{2}(1-\delta_A^+ +\delta_{A'}^+ + (1-\varepsilon_0)(1+\delta_A^+ -\delta_{A'}^+))  \\
& = &  \frac{1}{2}(2-\varepsilon_0(1+\delta_A^+-\delta_{A'}^+)) \\ & = & 1-\frac{\varepsilon_0}{2} (1+\delta_A^+-\delta_{A'}^+)  \ \leq \ 1-\varepsilon_0/2 \ ,
\end{eqnarray*}
since $0 \leq \delta_A^+-\delta_{A'}^+ \leq 1$.
The final possibility is that $h^*(A) = +1$ and $h^*(A') = -1$.  Apply
(\ref{pb}) on $A$ and (\ref{nb}) on $A'$ to obtain
\begin{eqnarray*}
\E[\gamma_i|p_i,A_i \in \{A,A'\}] & \leq &\hspace{-.1in} \frac{1}{2}\left(\delta_A^+ + (1-\delta_A^+)(1-\varepsilon_0)+\delta_{A'}^+ (1-\varepsilon_0) + (1- \delta_{A'}^+)\right) \\
 & = &  \frac{1}{2}(1+\delta_A^+-\delta_{A'}^+ + (1-\varepsilon_0)(1-\delta_A^++\delta_{A'}^+))
\end{eqnarray*}
Next, use the fact that because $A$ and $A'$ are neighbors,
$\delta_A^+ -\delta_{A'}^+ = p_i(h^*)-p_i(-h^*)$; if $-h^*$ does not
belong to $\H$, then $p_i(-h^*) = 0$.  Hence,
\begin{eqnarray*}
\E[\gamma_i|p_i,A_i \in \{A,A'\}] 
 & \leq &  \frac{1}{2}(1+\delta_A^+-\delta_{A'}^+ + (1-\epsilon_0)(1-\delta_A^++\delta_{A'}^+)) \\
& = &  \frac{1}{2}(1+p_i(h^*)-p_i(-h^*) + (1-\epsilon_0)(1-p_i(h^*)+p_i(-h^*))) \\
& \leq &  \frac{1}{2}(1+p_i(h^*) + (1-\epsilon_0)(1-p_i(h^*))) \ = \  1-\frac{\varepsilon_0}{2}(1-p_i(h^*))\ ,
\end{eqnarray*}
since the bound is maximized when $p_i(-h^*)=0$. 
Now bound $\E[\gamma_i|p_i]$ by the maximum of
the conditional bounds above to obtain
\begin{eqnarray*}
\E[\gamma_i|p_i] &  \leq & \max
\left\{1-\frac{\varepsilon_0}{2}(1-p_i(h^*)) \, , \, 1-\frac{\varepsilon_0}{4} \, , \,
    1-(1-\ch)\frac{\varepsilon_0}{2} \right\} \ ,
\end{eqnarray*}
and thus it is easy to see that
\begin{eqnarray*}
\E\left[\frac{\mm_{i+1}}{\mm_i}|p_i\right] & = & \frac{\E\left[
  \gamma_i|p_i\right]- p_i(h^*)}{ 1-p_i(h^*)} \ \leq \
1-\min\left\{\frac{\varepsilon_0}{2}(1-\ch),\frac{\varepsilon_0}{4}\right\} \ .
\hspace{.5in} \blacksquare \end{eqnarray*}

\subsection{Proof of Theorem~\ref{thm5}}
First consider the bound on $\E[R(\widehat h)]$.  Let $\delta_n = 2\, e^{-n|R_\Delta(h_1)-R_\Delta(h_2)|^2/6}$ and consider the conditional
expectation  $\E [R(\widehat h) |h_1,h_2] $; i.e., expectation with respect to the
$n/3$ queries drawn from the region $\Delta$, conditioned on the $2n/3$ queries
used to select $h_1$ and $h_2$. By Lemma~\ref{runoff} 
\begin{eqnarray*}
\E [R(\widehat h) |h_1,h_2]  & \leq  & (1-\delta_n) \, \min\{R(h_1),R(h_2)\}  \ + \  \delta_n \, \max\{R(h_1),R(h_2)\}  \ , \\
& = & \min\{R(h_1),R(h_2)\} \ + \ \delta_n \, \left[\max\{R(h_1),R(h_2)\}-\min\{R(h_1),R(h_2)\}\right]   \ , \\
& = & \min\{R(h_1),R(h_2)\} \ + \ \delta_n \, |R(h_1)-R(h_2)|    \ , \\
& = & \min\{R(h_1),R(h_2)\} \ + \ 2 \, |R(h_1)-R(h_2)|  \, e^{-n|R_\Delta(h_1)-R_\Delta(h_2)|^2/6}  \ ,  \\
& \leq & \min\{R(h_1),R(h_2)\}+ 2 |R(h_1)-R(h_2)|  \, e^{-n|R(h_1)-R(h_2)|^2/6}   \ , 
\end{eqnarray*}
where the last inequality follows from the fact that $|R(h_1)-R(h_2)| \leq |R_\Delta(h_1)-R_\Delta(h_2)|$.

The function $2u \, e^{-\zeta u^2}$ attains its maximum at $u=\sqrt{\frac{2}{e \zeta}} < \sqrt{1/\zeta}$, and
therefore
\begin{eqnarray*}
\E [R(\widehat h) |h_1,h_2]  & \leq  & \min\{R(h_1),R(h_2)\} \, + \, \sqrt{3/n} \ .
\end{eqnarray*}
Now taking the expectation with respect to $h_1$ and $h_2$ (i.e., with respect to the
queries used for the selection of $h_1$ and $h_2$)
\begin{eqnarray*}
\E [R(\widehat h)]  & \leq  & \E[\min\{R(h_1),R(h_2)\}] \, + \, \sqrt{3/n} \ , \\
& \leq  & \min\{\E[R(h_1)],\E[R(h_2)]\} \, + \, \sqrt{3/n} \ ,
\end{eqnarray*}
by Jensen's inequality.

Next consider the bound on $\P(h_1 \neq h^*) $.  This also follows from an application of Lemma~\ref{runoff}.  Note that if the conditions of Theorem~\ref{thm3} hold, then $\P(h_1 \neq h^*) \ \leq \ Ne^{-\lambda n/3}$.  Furthermore, if $h_1 = h^*$ and $h_2 \neq h^*$, then $|R_\Delta(h_1)-R_\Delta(h_2)|\geq |1-2\alpha|$. The bound on $\P(\widehat h \neq h^*)$ follows by applying the union bound to the events $h_1 = h^*$ and $R(\widehat{h}) > \min\{R(h_1),R(h_2)\}$. \hfill $\blacksquare$

\end{document}